\newtheorem{theorem}{Theorem}[section]
\newtheorem{proposition}{Proposition}[section]
\newcommand{\cA}{{\mathcal{A}}}
\newcommand{\cS}{{\mathcal{S}}}
\newcommand{\cP}{{\mathcal{P}}}
\newcommand{\cL}{{\mathcal{L}}}
\newcommand{\cX}{{\mathcal{X}}}
\newcommand{\cF}{{\mathcal{F}}}
\newcommand{\cN}{{\mathcal{N}}}
\newcommand{\cQ}{{\mathcal{Q}}}
\newcommand{\cT}{{\mathcal{T}}}
\newcommand{\cO}{{\mathcal{O}}}
\newcommand{\cH}{{\mathcal{H}}}
\newcommand{\be}{{\textbf{e}}}
\newcommand{\bP}{\textbf{P}}
\newcommand{\bQ}{\textbf{Q}}
\newcommand{\bX}{\textbf{X}}
\newcommand{\bY}{\textbf{Y}}
\newcommand{\bx}{\textbf{x}}
\newcommand{\by}{\textbf{y}}
\newcommand{\bq}{\textbf{q}}
\newcommand{\bz}{\textbf{z}}
\newcommand{\bc}{\textbf{c}}
\newcommand{\br}{\textbf{r}}
\newcommand{\bt}{\textbf{t}}
\newcommand{\bw}{\textbf{w}}
\newcommand{\bpii}{\pmb{\pi}} 
\newcommand{\bPi}{\pmb{\Pi}}
\newcommand{\bdt}{{\pmb{\delta}}}
\newcommand{\bphi}{{\pmb{\phi}}}
\newcommand{\bbR}{\mathbb{R}}
\newcommand{\bbE}{\mathbb{E}}
\newcommand{\bbP}{\mathbb{P}}
\newcommand{\UR}{\textsc{UR}}
\newenvironment{proof}[1][Proof]{\noindent\textbf{#1.} }{\ \rule{0.5em}{0.5em}}
\def\htien#1{}
\newcommand{\comments}[1]{{\color{blue}\textit{$\#$ #1}}}
\newcommand{\KL}{\text{KL}}
\newcommand{\T}{\text{\tiny T}}
\title{Robust Entropy-regularized Markov Decision Processes}
\author{%
  Tien Mai \\
  School of Computing and Information Systems\\
  Singapore Management University\\
  \texttt{atmai@smu.edu.sg} \\
  \And
   Patrick Jaillet \\
  Department of Electrical Engineering and Computer Science\\
  Massachusetts Institute of Technology\\
  \texttt{jaillet@mit.edu} \\
}
\begin{document}

\maketitle

\begin{abstract}
 Stochastic and soft optimal policies resulting from entropy-regularized Markov decision processes (ER-MDP) are desirable for exploration and imitation learning applications. 
Motivated by the fact that such policies are sensitive with respect to the state transition probabilities, and the estimation of these probabilities may be inaccurate,  
we study a robust version of the ER-MDP model, where the stochastic optimal policies are required to be robust with respect to the ambiguity in the underlying transition probabilities.
Our work is at the crossroads
of two important schemes in reinforcement learning (RL), namely, robust MDP and entropy-regularized MDP. We show that essential 
properties that hold for the non-robust ER-MDP and robust unregularized MDP models also hold in our settings, making the robust ER-MDP problem tractable. We show how our framework and results can be integrated into different algorithmic
schemes including value or (modified) policy iteration,  which would lead to new robust RL and inverse RL algorithms to handle uncertainties. Analyses on computational complexity and error propagation under conventional uncertainty settings are also provided. 
\end{abstract}

\section{Introduction}
This paper is focused on a robust approach for entropy-regularized Markov Decision Processes (ER-MDP) when the transition probabilities (or dynamics) are themselves uncertain.
By studying the robust ER-MDP framework, we aim at providing a theoretical basis to develop new robust reinforcement learning (RL)/planning algorithms to make decisions under dynamics uncertainty, and more accurate inverse reinforcement learning (IRL) algorithms for solving the problem of reward learning when the experts are conservative with respect to the dynamics uncertainty.  

Robust MDP is an important framework in the machine learning and operations research literature. The framework is motivated by the fact that, in many practical RL/planning problems, the estimation of dynamics might be far from accurate and optimal policies to Markov Decision problems would be very sensitive with respect to these probabilities \citep{mannor2004bias}. The MDP/RL literature has seen a number of solution methods on how to make robust policies in this uncertainty setting \citep{Nilim2004robustness,Lim2013RL_RMDP}. 
On the other hand, ER-MDP is another important scheme in the RL/IRL literature with a different motivation. 
The  framework was first proposed by \cite{Ziebart2008maximum} in the context of IRL, i.e., the problem of recovering an expert's reward function from demonstrations, with the advantage of
removing ambiguity between demonstrations and the expert policy, and casting the reward learning as a maximum likelihood estimation problem.
This framework then became popular in the IRL literature with many successful algorithms  \citep{Levine2011nonlinearIRL,Ho2016generative,Fu2017Robust_IRL}.  To the best of our knowledge, existing IRL frameworks/algorithms all assume that the expert knows the dynamics with certainty. Since it might be not the case and the expert would adapt their decisions with respect to the dynamic uncertainty, e.g., being conservative when making decisions,  ignoring this uncertainty issue in expert's demonstrations would lead to inaccurate reward structures. In addition, the ER-MDP has also been popular in the (deep) reinforcement learning (RL) literature with many state-of-the-art \textit{soft}-RL algorithms, e.g. Soft-Actor-Critic \citep{Haarnoja2018softa,Haarnoja2018softb},
with various motivations such as improving exploration, compositionality and robustness in RL. In fact, since the estimation of the dynamics might not be accurate, robust versions of the ER-MDP framework and  soft-RL algorithms are relevant and worth exploring, noting that such a robust framework has never been formally studied before.    

Given the importance of the ER-MDP framework in the RL/IRL literature  and the issue of facing uncertainties when making policies, a robust approach for ER-MDP  would provide principled answer to the questions of how to recover an accurate reward function from expert's robust/conservative demonstrations, and how to be robust in \textit{soft}-RL algorithms when the dynamics are themselves uncertain.  This motivates us to introduce and study the robust ER-MDP framework,  aiming at proving a complete and rigorous theoretical basic for developing new RL/IRL algorithm that is robust with respect to dynamics uncertainty. More specifically, we explore several aspects such as the duality properties of the robust problem, the complexity of the resulting algorithms and the complexity of the adversary's problem under different uncertainty settings.
We then use these results to design new and efficient robust soft-RL and IRL algorithms. 

Our main contributions in this paper are to show that 
the estimation of the robust optimal policies in robust ER-MDP can be done efficiently \footnote{Here, ``efficiency'' means that ``the worst-case complexity is polynomial time.''}, under conventional uncertainty settings, and the complexity is similar to the case of the robust (unregularized)  MDP and only modestly larger than the case of  non-robust ER-MDP model.   
More specifically,
we consider two  conventional uncertainty settings, i.e., $(s,a)$- and $(s)$-rectangularity \citep{Iyengar2005robustDP,Wiesemann2013robustMDP}  and show that 
some essential  properties such as contraction and Markov optimality hold for the robust ER-MDP model.
These properties are important to design tractable algorithms to solve our robust Markov problems. 
We also point out that the \textit{perfect duality} (or \textit{minimax equality}) that holds for the classical robust MDP model also holds in our setting,
{noting that our robust ER-MDP problem is more challenging to handle and requires new proofs, as many results that hold for the unregularized MDP do not hold for the ER-MDP, e.g. the Markov problem no-longer can be formulated as a linear program.  }
From these basic properties, we  analyze and provide bounds for the computational complexity  and error propagation of value function when the adversary's minimization problems are only solved approximately. We further show how our framework can be used to develop new RL/IRL algorithms.
Moreover, since solving the adversary's minimization problem is a key issue in robust MDP, we extend the results from previous studies \citep{Iyengar2005robustDP,Nilim2004robustness} by considering uncertainty sets based on several KL divergence bounds and show that the resulting minimization problem can be solved efficiently as well. We also provide numerical results to demonstrate  applications of our framework/algorithms in some IRL tasks.

\textbf{Related work:} The machine learning and operations research literatures have seen a number of studies on robust Markov decision processes (MDP) and  reinforcement learning in robust MDP \citep{White1994ROMDP,Iyengar2005robustDP,Nilim2004robustness,Lim2013RL_RMDP,Wiesemann2013robustMDP}. Existing work mostly relies on unregularized MDP, thus makes use of some results that only hold for the unregularized model, e.g., the Markov problem can be formulated as a linear program. On the other hand,  the ER-MDP framework has become popular in both RL and IRL literature. In RL,  \cite{Schulman2015trust} propose a policy iteration scheme, called Trust Region Policy Optimization, in which entropy terms are added to the greedy step to penalize the Kullback-Leibler (KL) divergence between two consecutive policies. The idea  of using entropy regularizers to penalize the divergence between consecutive policies has been also used in Dynamic Policy Programming (DPP) \citep{Azar2012dynamic},  Maximum A Posteriori
Policy Optimization (MPO) \cite{Abdolmaleki2018maximum,Abdolmaleki2018relative}, and robust MPO \cite{Mankowitz2019robust}.    
Some recent RL algorithms have been developed to take  advantage of soft value function and soft policies resulting from  the ER-MDP scheme, for example, Soft-Q learning \citep{Fox2015taming,Schulman2017equivalence,Haarnoja2017RL_ER} and Soft-Actor-Critic \citep{Haarnoja2018softa,Haarnoja2018softb}. 
\cite{Mankowitz2019robust} have added robustness to a \textit{soft}-RL (i.e., MPO) algorithm, but their work is experimentally focused and their theoretical explorations are limited, in the sense that many important aspects such as duality properties, complexity and other uncertainty settings were not investigated.
In IRL, many state-of-the-art algorithms are based on the ER-MDP framework, e.g., Gaussian Process IRL \citep{Levine2011nonlinearIRL} and generative adversarial IRL \citep{Goodfellow2014GAN,Fu2017Robust_IRL,Yu2019multi}.  \citep{viano2020robust} propose a robust IRL algorithm under a dynamic mismatch between the expert and learner, but their settings are different, as they assume that the learner does not knows the expert's dynamics with certainty, while in our context the expert is unsure about the dynamics 
and this information is revealed to the learner. 
Other types of regularizers have been also studied. For example, \cite{Lee2018sparse} propose to use a Tsallis entropy with the motivation of having sparse policies.  \cite{Geist2019theory} propose a general MDP framework regularized by any concave function. We will show that our theoretical results can also be applied to these general settings.

Our paper is structured as follows. Section \ref{sec:problem-description} describes our problem setting and Section \ref{sec:infinite-horizon} presents theoretical properties of the robust ER-MDP model. We discuss related algorithms and frameworks in Section \ref{sec:related-algo}. Section \ref{sec:uncertainty-model} analyses the computational  complexity of the adversary's problems. Section \ref{sec:expr-IRL} provides experiments for robust IRL,  and finally, Section \ref{sec:conclude} concludes.  We provide all the proofs and relevant discussions in the supplementary material. We use $|\cS|$ to denote the cardinality of set $\cS$. Boldface characters represent matrices (or vectors) or a collection of values.



\section{Problem description}\label{sec:problem-description}
 Consider an infinite-horizon Markov decision process (MDP) for an agent with finite states and actions, defined by a tuple $(\cS,\cA,\bQ, \br,\gamma)$, where $\cS$ is a set of states $\cS = \{1,2,\ldots,|\cS|\}$, $\cA$ is a finite set of actions, 
 $\bQ = \{\bq^0,\ldots,\bq^{\infty}\}$ are transition probabilities where
$\bq^t:\cS\times \cA\times\cS \rightarrow [0,1]$ is a transition probability function at time $t$, i.e., $q^t(s_{t+1}|a_t,s_t)$ is the probability of moving to state $s_{t+1}\in\cS$ from $s_t\in \cS$ by performing action  $a_t\in \cA$ at time step $t$, $\br = \{r(a|s),a\in\cA,s\in\cS\}$ is a reward function, and $\gamma\in[0,1]$ is a discount factor. 

Let $\bPi = \{\bpii^0,...,\bpii^\infty\}$ be a policy function where $\pi^t(a_t|s_t)$ is the probability of making action $a_t\in\cA$ at state $s_t\in\cA$ at time $t\in \{0,1,...\}$, the goal of (forward) reinforcement learning under maximum causal entropy principle  is to find an optimal policy $\bPi$ that maximizes the expected entropy-regularized discounted reward \citep{ziebart2010_IRL_Causal,Bloem2014infiniteIRL,Schulman2017equivalence,Haarnoja2017RL_ER}
{\small \begin{align}
&\max_{\substack{\bpii^t \in \Delta^\pi\\ t = 0,1...}}\Bigg\{F_\infty(\bPi,\bQ) = \bbE_{\tau \sim (\bPi,\bQ)}\Big[\sum_{t=0}^{\infty}\gamma^{[t]} r(a_t|s_t)  - \gamma^{[t]} \eta\ln \pi^t(a_t|s_t) \Big]\Bigg\},\label{prob:DET-Max-EP}
\end{align}}
where $\gamma^{[t]}$ refers to ``\textit{$\gamma$ to the power of $t$}'' (we use $[.]$ to distinguish it from a superscript $t$),  $\tau = \{(s_0,a_0),\ldots, (s_\infty,a_\infty)\}$ is a strategy in the infinite-horizon case,
$\Delta^\pi$ is the set of policies 
$\Delta^{\pi} = \{\pi(a|s)\in[0,1]|\ \sum_{a\in\cA} \pi(a|s) = 1,\ \forall s\in\cS\}$, and $\eta\geq 0$ is a regularization coefficient. The term $\cH(\bpii) = -\bbE_{\bPi,\bQ}[\sum_{t=0}^\infty \gamma^{[t]}\eta \ln\pi(a_t|s_t)]$ is referred to as a $\gamma$-discounted causal entropy, distinguishing the entropy regularized with the standard MDP one. This term makes the expected discount rewards no-longer linear in $\bpii^t$, for $1=0,\ldots,\infty.$
 	

In our problem, we assume that the dynamics (i.e., transition probabilities) are  uncertain and  the robust model aims at finding a robust policy that maximizes the worst-case  expected entropy-regularized reward function. The robust problem can be formulated as 
\begin{align}
\max_{\substack{\bpii^t\in \Delta^\pi \\t= 0,1... }}\min_{\substack{\bq^t\in\cQ\\t = 0,1...}} \Big\{F_{\infty}(\bPi,\bQ) \Big\},\label{prob:robust-infinite}
\end{align}
where $\cQ$ is an uncertainty set for the dynamics, defined as
$
  \cQ \subset \Delta^{q} = \{\bq | \ \sum_{s'\in\cS}q(s'|a,s) = 1,\forall (s',s,a) \},
$
and $\bq^t$ is a vector of transition probabilities chosen by the adversary at time step $t = 0,1,...\infty$. 
Here,  the uncertainty set $\cQ$  are assumed to be  (state,action)-wise  or (state)-wise decomposable, i.e..
the uncertainty set $\cQ$ has the form
$\cQ = \otimes_{(s,a)}\cQ_{sa}$ or $\cQ = \otimes_{(s)}\cQ_{s}$, where $\cQ_{sa}$ and $\cQ_s$ are uncertainty sets for the transition probabilities  $\bq_{sa} = \{q(s'|s,a),\forall s'\}$ and $\bq_s = \{q(s'|s,a)|\forall s,a\}$, respectively, for all $a\in\cA,s\in \cS$. We call these assumptions as $(s,a)$- and $(s)$-rectangularity. 
These assumptions have been widely used to derive tractable solutions for  robust MDP problems  \citep{Nilim2004robustness,Iyengar2005robustDP,Wiesemann2013robustMDP}.


\section{Theoretical Properties and Algorithms}\label{sec:infinite-horizon}
We present essential theoretical results for the ER-MDP model. These results are critical for the tractability of the robust  problems. We then also discuss how to compute robust optimal policies and provide complexity analyses.

\subsection{Theoretical Properties}
We will show that some basic results holding for the robust unregularized MDP and non-robust ER-MDP models are 
are also valid for our robust one, making the computation of robust optimal policies tractable. To facilitate our exposition, let us first consider the mapping
 $\cT[V]:\bbR^{|\cS|} \rightarrow \bbR^{|\cS|}$  
\begin{align}
\cT[V](s) = \max_{\bpii \in \Delta^\pi}&\min_{\bq\in\cQ}\Big\{ \psi_s(\bpii,\bq,V) \Big\},\ \forall s\in \cS \label{eq:define-Vs}
\end{align}
where 
$
\psi_s(\bpii,\bq,V) = \bbE_{\bpii_s} [ r(a|s) -\eta\ln \pi(a|s)+ \gamma \bbE_{s'\sim \bq_{sa}} [V(s')] ] 
$. We also define  $\cT^{\bpii}[V] = \min_{\bq\in\cQ}\{ \psi_s(\bpii,\bq,V) \}$ for a fixed policy $\bpii$. On the other hand, let  $V^*,V^{\bpii}:\cS\rightarrow \bbR$ be the expected worst-case accumulated rewards under uncertain transition probabilities (value functions)
\begin{align}
V^{\bpii}(s)& = \min_{\substack{\bq^t\in\cQ\\t = 0,1...}}\Bigg\{ \bbE_{\tau \sim (\bPi,\bQ)} \Bigg[\sum_{t=0}^{\infty}\gamma^{[t]} \Bigg(r(a_t|s_t)- \eta\ln \pi^t(a_t|s_t)\Bigg)\Bigg|
\ s_0 = s\Bigg]\Bigg\}\label{eq:define-V-pi}
\end{align}
and  $V^{*}(s) = \max_{\substack{\bpii^t\in \Delta^\pi,\; t= 0,1... }} V^{\bpii}(s)$, $\forall s\in\cS$.
The following theorem focuses on the $(s,a)$-rectangularity case and  shows some main properties of the robust problem. For notational brevity, let us first denote  $h(a,s|V) = r(a|s) +  \gamma \min_{\bq_{sa}\in\cQ_{sa}}\left\{\bbE[ V(s')]\right\}$ for any $a\in\cA,s\in\cS$.
\begin{theorem}[$(s,a)$-rectangularity]
\label{th:main-sa-rec}
Assume that the uncertainty set $\cQ$ is $(s,a)$-rectangular, $\cT[V]$ and $\cT^{\bpii}[V]$ are contraction mappings of parameters $\gamma$ and $V^*$, and $V^{\bpii}$ are unique solutions to the contraction systems $\cT[V] = V$ and $\cT^{\bpii}[V] = V$, and the mapping $\cT[V]$ can be updated as
$
    \cT[V] = {\eta} \ln\Big(\sum_{a\in\cA} \exp\Big(h(a,s|V)/\eta \Big)\Big)$, 
and the policy $\bpii^*$ defined as $
\pi^*(a|s) = \left(\exp\Big(h(a,s|V^*)/\eta\Big)\right)\Big/\left(\sum_{a'} \exp\Big(h(a',s|V^*)/\eta\Big)\right), \ \forall s\in\cS, a\in\cA
$  is optimal to  \eqref{prob:robust-infinite}. Moreover, the perfect duality holds for both the mapping $\cT[V]$ and $\cT^{\bpii}[V]$  and robust expected reward function, i.e., $\max_{\bpii \in \Delta^\pi}\;\min_{\bq\in\cQ}\Big\{ \psi_s(\bpii,\bq,V) \Big\}  =\min_{\bq\in\cQ} \;  \max_{\bpii \in \Delta^\pi}\Big\{ \psi_s(\bpii,\bq,V) \Big\}$ and $\max_{\bpii^0,\ldots}\min_{\bq^0,\ldots} F_\infty(\bPi,\bQ) = \min_{\bq^0,\ldots} \max_{\bpii^0,\ldots} F_\infty(\bPi,\bQ).$

\end{theorem}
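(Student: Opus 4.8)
The plan is to first dispose of the single-stage (one-step Bellman) claims, then bootstrap them to the infinite-horizon statements. The structural fact I would exploit throughout is $(s,a)$-rectangularity: since $\cQ = \otimes_{(s,a)}\cQ_{sa}$ and $\psi_s(\bpii,\bq,V) = \sum_a \pi(a|s)\big(r(a|s) - \eta\ln\pi(a|s) + \gamma\,\bbE_{s'\sim\bq_{sa}}[V(s')]\big)$ couples each block $\bq_{sa}$ only to the single nonnegative coefficient $\pi(a|s)$, the inner minimization decomposes across actions. Hence $\min_{\bq\in\cQ}\psi_s(\bpii,\bq,V) = \sum_a \pi(a|s)\big(h(a,s|V) - \eta\ln\pi(a|s)\big)$ with $h$ as defined in the statement. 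This reduces $\cT[V](s)$ to maximizing a strictly concave free-energy objective over $\Delta^\pi$; a routine Lagrangian/KKT computation (with the constraint $\sum_a\pi(a|s)=1$) yields the softmax maximizer $\bpii^*$ and the closed form $\cT[V](s) = \eta\ln\sum_a\exp(h(a,s|V)/\eta)$.

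Single-stage perfect duality I would obtain from Sion's minimax theorem: $\psi_s$ is concave (the sum of a linear term and the concave entropy) in $\bpii_s$ and linear, hence convex, in $\bq$, while $\Delta^\pi$ is compact convex and each $\cQ_{sa}$ is assumed convex and compact; Sion then gives $\max_\bpii\min_\bq\psi_s = \min_\bq\max_\bpii\psi_s$ for $\cT$, and the same argument, dropping the outer max, handles $\cT^{\bpii}$. For contraction I would use the closed form: $h(\cdot,s\,|\,\cdot)$ is $\gamma$-Lipschitz in $V$ under the sup-norm because $|\min_{\bq_{sa}}\bbE_{\bq_{sa}}[V] - \min_{\bq_{sa}}\bbE_{\bq_{sa}}[V']| \le \max_{\bq_{sa}}\big|\bbE_{\bq_{sa}}[V-V']\big| \le \|V-V'\|_\infty$, and the log-sum-exp operator is nonexpansive in the sup-norm (its gradient is a probability vector). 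Composing gives $\|\cT[V]-\cT[V']\|_\infty \le \gamma\|V-V'\|_\infty$; for $\cT^{\bpii}$ the weights $\pi(a|s)$ sum to one, giving the same bound. With $\gamma<1$, Banach's theorem supplies unique fixed points.

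For Markov optimality I would unroll the definitions of $V^{\bpii}$ and $V^*$ by one stage. Rectangularity is again decisive: the adversary's stage-$0$ variables $\{\bq^0_{sa}\}$ sit in disjoint, independently constrained blocks from all later-stage variables, so the minimization over sequences $\{\bq^t\}$ splits into the stage-$0$ choice plus a tail problem that is exactly the value function started one step later. This yields $\cT^{\bpii}[V^{\bpii}]=V^{\bpii}$, so by uniqueness $V^{\bpii}$ solves $\cT^{\bpii}[V]=V$; maximizing over (a priori nonstationary) policies and invoking the single-stage optimizer shows the stationary softmax policy attains the supremum, whence $\cT[V^*]=V^*$ and $\bpii^*$ is optimal for \eqref{prob:robust-infinite}. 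To make the unrolling rigorous I would introduce finite-horizon truncations obtained by iterating $\cT$ (resp. $\cT^{\bpii}$) from an arbitrary $V_0$, identify each iterate with a finite-horizon robust value, and pass to the limit, bounding the discarded tail by $O(\gamma^{[N]})$ using discounting together with the boundedness of $r$ and of the per-step entropy on the simplex.

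Finally, full-game perfect duality follows by promoting the stagewise swap to the whole horizon: I would argue that $\max_{\bpii^0,\ldots}\min_{\bq^0,\ldots}F_\infty$ equals $V^*$ at the initial state, and that reversing the order gives the same value, since at each stage the inner optimizations interchange by the single-stage Sion equality and an induction over the finite-horizon truncations (again controlling the tail by discounting) propagates the equality to the limit. The main obstacle I anticipate is precisely this Markov-optimality/duality interchange in the infinite-horizon game, namely justifying that the minimization over entire adversarial sequences collapses to the recursive operator and that max and min may be exchanged across infinitely many stages, whereas the one-step closed form, contraction, and single-stage duality are comparatively mechanical once rectangularity is used to decouple the adversary's blocks.
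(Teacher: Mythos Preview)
Your overall architecture matches the paper's: decompose the inner minimization via $(s,a)$-rectangularity to obtain $h(a,s|V)$, derive the softmax closed form by Lagrangian, unroll the Bellman operator for Markov optimality, and lift single-stage duality to the full game by showing the dual operator has the same fixed point. Two points of divergence are worth flagging.

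First, your contraction argument is different from the paper's and in fact tidier. The paper proves $\|\cT[V]-\cT[V']\|_\infty \le \gamma\|V-V'\|_\infty$ by a direct $\epsilon$-argument on the $\max\min$ definition (choose a near-optimal $\bpii^*$ for $V$, then a near-optimal $\bq^*$ for $V'$), which has the mild advantage of not relying on the closed form and hence transferring verbatim to the $(s)$-rectangular case. Your route through $\gamma$-Lipschitzness of $h$ plus sup-norm nonexpansiveness of log-sum-exp is equally valid here and shorter.

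Second, and more substantively, your single-stage duality step invokes Sion under the assumption that each $\cQ_{sa}$ is convex and compact. The theorem as stated makes no such assumption; the paper explicitly stresses that perfect duality holds here ``even though \ldots\ the uncertainty set $\cQ$ is not necessarily compact and/or convex.'' The paper's proof avoids Sion entirely: it uses exactly the decomposition you already wrote down, taking $\overline{\bq}_{sa}$ with $\bbE_{\overline{\bq}_{sa}}[V]\le \min_{\bq_{sa}}\bbE_{\bq_{sa}}[V]+\epsilon$ and showing $\min_\bq\max_\bpii\psi_s \le \max_\bpii\min_\bq\psi_s+\epsilon$ directly. Since you have already established $\min_{\bq}\psi_s(\bpii,\bq,V)=\sum_a\pi(a|s)(h(a,s|V)-\eta\ln\pi(a|s))$, the same decomposition gives the $\min\max$ value as $\eta\ln\sum_a\exp(h(a,s|V)/\eta)$ by monotonicity of log-sum-exp, and duality follows without any topological or convexity hypotheses on $\cQ_{sa}$. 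As written, your proposal proves a strictly weaker statement than the theorem; the fix is to drop Sion and reuse your own decomposition.
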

The detailed proof is provided  in the supplementary (Section \ref{sec:proof:th31}). 
The proof for the contraction property of $\cT$ and $\cT^{\pi}$ 
shares the same spirit as in the standard robust MDP model \citep{Iyengar2005robustDP}. The main difference here is the inclusion of the nonlinear entropy term in the Bellman update. The formulation for the robust optimal policy has a similar form as those from the non-robust ER-MDP model, except that instead of performing the Bellman update with known transition probabilities, we need to compute a minimization value $\min_{\bq_{sa} \in\cQ_{sa}} \big\{\bbE_{\bq_{sa}}[V(s')]\}$. The proof can done  using   the fact that the minimization problem $\min_{\bq_{sa}}$ can be put inside the expectation in such a way that it does not depend on the policy $\bpii$, i.e.,
\begin{align}
&\min_{\bq\in\cQ}\Big\{ \psi_s(\bpii,\bq,V) \Big\} = \bbE_{\bpii} \Big[ r(a|s) -\eta\ln \pi(a|s)+ \gamma \min_{\bq_{sa}}\bbE_{s'\sim \bq_{sa}} [V(s')] \Big], \label{eq:sa-equality} 
\end{align}
noting that it is only valid if the uncertainty set $\cQ$ are $(s,a)$-rectangular. \cite{Mankowitz2019robust} give the same formulations, but they do not explicitly show that this solution is optimal to the infinite-horizon problem \eqref{prob:robust-infinite} and corresponds to a saddle point of the \textit{max-min} problem.  
The \textit{perfect duality} property is interesting in the sense that we can swap the \textit{max-min} order even-though the objective functions $\psi_s(\bpii,\bq,V)$ and $F_{\infty}(\cdot)$ are not linear in $\bpii$ and the uncertainty  set $\cQ$ is not necessarily compact and/or convex. We note that
in \cite{Nilim2004robustness}, the perfect duality is proved for standard robust MDP  using linear programming, which is not applicable in our context. 

We now discuss how our results can be extended to the $(s)$-rectangularity case, which allows the transition probabilities $\bq_{sa}$ to be dependent over actions $a\in \cA$. The main difference and challenge lie in the fact that adversary's minimization problem involve the policy variable $\bpii$, making  \eqref{eq:sa-equality} no-longer valid. The following theorem shows how a robust optimal policy in this case can be efficiently computed. First, let $z(a,s|V,\bq) =  r(a|s) +  \gamma \sum_{s'\in\cS} q(s'|a,s) V(s')$ for notational simplicity.
\begin{theorem}
[$(s)$-rectangularity]
\label{theor:optimal-policies-s-rect}
Assume that the uncertainty set $\cQ$ is $(s)$-rectangular   and $\cQ_s$ are compact and  convex,
for all $s\in\cS$, a robust optimal policy $\bpii^*_s$ to \eqref{prob:robust-infinite} can be computed as 
$
\pi^*(a|s) = \left({\exp\big(z(a,s|V^*,\bq^*)/\eta\big)}\right)\big/\left({\sum_{a'} \exp\big(z(a',s|V^*,\bq^*)/\eta \big)}\right), \ \forall s\in\cS, a\in\cA,
$
where $V^*$ is the unique fixed point solution to the contraction mapping ${\cT}[V] = V$, where
${\cT}[V](s) = {\eta} \bigg\{\ln\bigg(\sum_{a\in \cA}\exp\big(z(a,s|V,\bq^*)/\eta \big)\bigg)\bigg\}$, where
and $\bq^*_s$ is an optimal solution to the convex optimization problem
\begin{equation}
\label{eq:mapping-inner-s-rect}
\min_{\bq_s\in\cQ_s} \left\{\sum_{a\in \cA}\exp\Big(z(a,s|V^*,\bq)/\eta \Big)\right\},\;\forall s\in\cS.    
\end{equation}
Moreover, the perfect duality holds.
\end{theorem}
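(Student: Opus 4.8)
The plan is to reduce the per-state max-min to a single convex program via a minimax swap, then lift the single-stage result to the infinite horizon by contraction, reusing the template of Theorem~\ref{th:main-sa-rec}. First I would fix a value function $V$ and examine the inner objective $\psi_s(\bpii,\bq,V)=\sum_{a}\pi(a|s)\big[z(a,s|V,\bq)-\eta\ln\pi(a|s)\big]$. This is concave in $\bpii$ (the linear term plus the concave negative-entropy $-\eta\sum_a\pi(a|s)\ln\pi(a|s)$) and affine, hence convex, in $\bq$, while $\Delta^\pi$ and $\cQ_s$ are both convex and compact by assumption. Sion's minimax theorem then yields the per-state perfect duality $\max_{\bpii_s}\min_{\bq_s}\psi_s=\min_{\bq_s}\max_{\bpii_s}\psi_s$. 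This is exactly the step where the $(s)$-coupling bites: unlike the $(s,a)$-case, the minimization over $\bq_s$ cannot be pushed inside the expectation over actions, so \eqref{eq:sa-equality} fails and the swap must come from convexity rather than separability.

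Next I would solve the inner maximization for fixed $\bq$. Maximizing $\sum_a\pi(a|s)[z(a,s|V,\bq)-\eta\ln\pi(a|s)]$ over the simplex is the classical entropy-regularized problem whose optimizer is the softmax policy $\pi(a|s)\propto\exp(z(a,s|V,\bq)/\eta)$ with optimal value $\eta\ln\big(\sum_a\exp(z(a,s|V,\bq)/\eta)\big)$. Substituting this into the outer minimization and using that $\ln$ is monotone reduces the adversary's problem to \eqref{eq:mapping-inner-s-rect}; since $z(a,s|V,\bq)$ is affine in $\bq$ and $\exp$ is convex and increasing, the objective $\sum_a\exp(z(a,s|V,\bq)/\eta)$ is convex, so \eqref{eq:mapping-inner-s-rect} is a convex program over the convex set $\cQ_s$. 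Letting $\bq^*_s$ be a minimizer gives $\cT[V](s)=\eta\ln\big(\sum_a\exp(z(a,s|V,\bq^*)/\eta)\big)$ as claimed, with the stated softmax formula as the optimal policy at the fixed point.

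To establish a unique fixed point I would prove the $\gamma$-contraction in sup-norm. Writing $g(V,\bq)=\eta\ln(\sum_a\exp(z(a,s|V,\bq)/\eta))$ and letting $\bq^*,\hat\bq$ be the minimizers for $V,V'$, the cross-optimality inequalities $g(V,\bq^*)\le g(V,\hat\bq)$ and $g(V',\hat\bq)\le g(V',\bq^*)$ sandwich $\cT[V](s)-\cT[V'](s)$ between $g(V,\bq^*)-g(V',\bq^*)$ and $g(V,\hat\bq)-g(V',\hat\bq)$. Since log-sum-exp is $1$-Lipschitz in the sup-norm and $z(a,s|V,\bq)-z(a,s|V',\bq)=\gamma\sum_{s'}q(s'|a,s)(V(s')-V'(s'))$ is bounded by $\gamma\|V-V'\|_\infty$ uniformly in $\bq$, both sides are bounded by $\gamma\|V-V'\|_\infty$, giving $\|\cT[V]-\cT[V']\|_\infty\le\gamma\|V-V'\|_\infty$. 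Banach's theorem then delivers the unique $V^*$.

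Finally, for Markov optimality and the infinite-horizon perfect duality I would follow the argument of Theorem~\ref{th:main-sa-rec}: the contraction pins down $V^*$ as the robust value function, the stationary softmax policy built from $V^*$ and $\bq^*$ attains it, and the per-state duality propagates to $\max_{\bpii^0,\ldots}\min_{\bq^0,\ldots}F_\infty=\min_{\bq^0,\ldots}\max_{\bpii^0,\ldots}F_\infty$ by the same fixed-point/limiting argument. I expect the main obstacle to be the minimax swap in the first step — justifying Sion's theorem to replace the separability used in the $(s,a)$-case — together with verifying that the state-wise, stage-independent adversary response $\bq^*$ is globally optimal across the whole horizon, which is precisely what the contraction uniqueness secures.
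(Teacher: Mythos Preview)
Your outline is essentially correct and tracks the paper's proof closely: both invoke a minimax theorem at the Bellman level (you use Sion, the paper uses von Neumann), solve the inner max in closed form to get the log-sum-exp value, reduce the adversary's problem to the convex program \eqref{eq:mapping-inner-s-rect}, establish $\gamma$-contraction, and then lift to the infinite horizon via the machinery of Theorem~\ref{th:main-sa-rec}.

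The one place you move too fast is the sentence ``with the stated softmax formula as the optimal policy at the fixed point.'' From the min-max side you have $\bq^*_s$ and its best response $\bpii^*_s$, and you know the min-max value equals the max-min value; but it is not automatic that this particular $\bpii^*_s$ attains the outer max in $\max_{\bpii_s}\min_{\bq_s}\psi_s$. The paper singles this out as ``the key issue/challenge'' and handles it by a direct first-order argument: since $\bq^*_s$ minimizes the convex function $h(\bq_s)=\sum_a\exp(z(a,s|V,\bq)/\eta)$, the variational inequality $\nabla h(\bq^*_s)^\T(\bq_s-\bq^*_s)\ge 0$ on $\cQ_s$ shows that $\bq^*_s$ also minimizes $\psi_s(\bpii^*,\cdot)$, so $(\bpii^*,\bq^*)$ is a saddle point and $\bpii^*$ solves the max-min. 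Your route can be closed more cheaply by observing that $\psi_s$ is \emph{strictly} concave in $\bpii_s$ (from the entropy term), so the best response to any $\bq_s$ is unique; since Sion plus compactness guarantees a saddle point, and any max-min optimizer must be a best response to $\bq^*_s$, uniqueness forces it to coincide with the softmax at $\bq^*_s$. Either argument works, but the step must be made explicit. Your contraction proof via cross-optimality of the minimizers and the $1$-Lipschitz property of log-sum-exp is a clean alternative to the paper's $\epsilon$-approximation argument on the original max-min form.
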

The proof can be found in the supplementary material.
In this setting,  we assume that the uncertainty set if convex and compact in order to use the {Von Neumann’s minimax theorem} to swap the \textit{max-min} order. 
Note that the perfect duality for the robust unregularized MDP model under  $(s)$-rectangular sets has been proved in \cite{Wiesemann2013robustMDP}  using the result that 
the value function can be expressed as $V^{\bpii} = \sum_{t=0}^{\infty}[\lambda\widehat{\bP}^{\bpii}]^{[t]} \widehat{\br}^{\bpii}$, where $\widehat{\bP}^{\bpii}$ is of size $|\cS|\times|\cS|$ with entries $\widehat{\bP}^{\bpii}_{ss'}=\sum_{a}\pi(a|s)q(s'|a,s)$ and $\widehat{\br}^{\bpii}_s = \sum_{a}\pi(a|s)r(a|s)$  \citep{Puterman2014markov}. This result does not apply in our context due to the inclusion of the (nonlinear) entropy terms.
Our idea to derive the formulation for the optimal policy  is that if we swap the \textit{min-max} order, then the inner maximization problem $\max_{\bpii}  \big\{ \psi_s(\pi,\bq,V) \big\}$ will yield a closed-form solution and the corresponding \textit{min-max} problem can be converted into a  \textit{min} problem with a (strictly) convex objective function, which is way easier to solve than the \textit{max-min} counterpart. For this reason, we assume that the uncertainty set $\cQ$ is convex, which is a typical assumption  in the robust optimization literature, and show that a solution to the \textit{min-max} problem is also  optimal to the \textit{max-min} one (i.e., saddle point). This greatly simplifies the computation.


 
In the $(s)$-rectangularity case, the Bellman update requires to solve an exponential convex optimization problem, instead of a linear one.  Under the same uncertainty settings as in the $(s,a)$-rectangularity case (e.g. uncertainty sets based on KL divergence), it seems not possible to efficiently solve the inner problem by bisection. However, it is still possible to solve these problems in polynomial time. We discuss this in detail in Section \ref{sec:uncertainty-model}.

\subsection{Approximate Robust Value Iteration and Complexity Analysis}
\label{subsec:VI}
In this section we discuss the computation of optimal policies under our robust settings.
We focus on value iteration and will talk about other algorithms, e.g., IRL, policy iteration, in the next section.   
We first consider the $(s,a)$-rectangularity case and note that the analysis can be further extended to the $(s)$-rectangularity case.

The contraction mapping implies that the value iteration method converges to a unique fixed point when the number of iterations tends to infinity. Let us define $\cT^n[V] = \cT[\cT^{n-1}[V]] $ for $n=1,2,...$ and $\cT^0[V] = V$, for any $V\in\bbR^{|\cS|}$, and let $V^*$ is the unique fixed point to the mapping $\cT[V] = V$
From the contraction property, it is well-known that, to obtain an $\epsilon$-approximation of the fixed point solution, one would need $(\ln\epsilon^{-1} - \ln ||V^*||_\infty)/\ln\gamma \in \cO(\ln\epsilon^{-1} )$  iterations.

The mapping $\cT[V]$ involves a minimization problem $\min_{\bq_{sa}}\bbE[V(s')]$, which
can be  solved approximately by bisection   \citep{Iyengar2005robustDP,Nilim2004robustness}. In this section we study the complexity to compute  $\epsilon$-approximations of the value function and optimal policy under \textit{soft} Bellman updates. First, to facilitate our exposition, given $\xi>0$, assume that there is an algorithm  of complexity $C(\xi)$ that allows to compute a solution $\overline{\bq}_{sa}$ such that, $\forall a\in\cA,s\in\cS$, 
$
\min_{\bq_{sa}}\bbE_{\bq_{sa}}[V(s')]\geq \bbE_{\overline{\bq}_{sa}}[V(s')] -\xi. \nonumber 
$
We will examine $C(\xi)$ under different uncertainty sets later in Section \ref{sec:uncertainty-model}. Since the adversary's problem can only be solved approximately, $\cT[V]$ and optimal policies are approximated, for any $s\in\cS$, as 
\begin{align}
\widetilde{\cT}[V](s) &= \eta \ln\left(\sum_{a\in \cA}\exp\Bigg(\Big(z(a,s|V,\overline{\bq}) \Big)/\eta\Bigg) \right),\;
\widetilde{\bpii}_s = \frac{\exp\Big(z(a,s|\widetilde{V},\overline{\bq})/\eta\Big)}{\sum_{a'} \exp\Big(z(a',s|\widetilde{V},\overline{\bq})/\eta \Big)}, \label{eq:approx-contraction-mapping}
\end{align}
where $\widetilde{V}$ is an approximate value function. 
 Theorem \ref{th:ER-approx-CM} below examines approximation errors in \eqref{eq:approx-contraction-mapping}. The results differ from standard robust MDP because we have \textit{soft} approximate optimal policies and $\widetilde{\cT}[V]$ involves  \textit{log} and \textit{exp} functions.  
\begin{theorem}
\label{th:ER-approx-CM}
The approximate Bellman update and policy can be bounded as follows
\begin{itemize}
    \item [(i)] $||\widetilde{\cT}^n[V] - \cT^n[V] ||_\infty \leq \xi\gamma (1-\gamma^{[n]})/(1-\gamma)$
    \item [(ii)] For any $\epsilon>0$, if $\xi \leq  \epsilon(1-\gamma)^2/(4\gamma)$ and $||\widetilde{\cT}^{n+1}[V]-\widetilde{\cT}^n[V]||_\infty \leq 3\epsilon(1-\gamma)/4$, then $||\widetilde{\cT}^n[V] -V^*||_\infty \leq \epsilon$
    \item[(iii)] If we compute a soft policy $\widetilde{\bpii}$ by an approximate  value function $\widetilde{V}$ such that $||\widetilde{V}-V^*||_\infty \leq \epsilon$, for an $\epsilon>0$, then $||\widetilde{\bpii}-\bpii^*||_\infty \leq (e^{2(\epsilon+\xi)/\eta}-1)$.
\end{itemize}
\end{theorem}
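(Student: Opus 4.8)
The plan is to first establish a single-step error bound between $\widetilde{\cT}$ and $\cT$, then bootstrap it through induction and the contraction property of Theorem~\ref{th:main-sa-rec} to get (i) and (ii); part (iii) will need a separate sensitivity analysis of the softmax map. The workhorse observation is that the log-sum-exp operator $W\mapsto \eta\ln\sum_a \exp(W_a/\eta)$ is nonexpansive in the sup-norm: if $|x_a-y_a|\le c$ for all $a$, then the two values $\eta\ln\sum_a\exp(x_a/\eta)$ and $\eta\ln\sum_a\exp(y_a/\eta)$ differ by at most $c$ (bound each summand and pull the factor $e^{c/\eta}$ out of the logarithm). Since $h(a,s|W)$ and $z(a,s|W,\overline{\bq})$ differ only through $\gamma\min_{\bq_{sa}}\bbE[W(s')]$ versus $\gamma\,\bbE_{\overline{\bq}_{sa}}[W(s')]$, the defining inequality of the $\xi$-approximate oracle queried at $W$ gives $|h(a,s|W)-z(a,s|W,\overline{\bq})|\le\gamma\xi$ for every $a,s$, and hence the single-step bound $||\widetilde{\cT}[W]-\cT[W]||_\infty\le\gamma\xi$ for any $W$.

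For (i) I would induct on $n$. Writing $V_n=\cT^n[V]$ and $\widetilde V_n=\widetilde{\cT}^n[V]$ (so $V_0=\widetilde V_0=V$), I split $||\widetilde V_n-V_n||_\infty \le ||\widetilde{\cT}[\widetilde V_{n-1}]-\cT[\widetilde V_{n-1}]||_\infty + ||\cT[\widetilde V_{n-1}]-\cT[V_{n-1}]||_\infty$, bounding the first term by $\gamma\xi$ via the single-step bound and the second by $\gamma||\widetilde V_{n-1}-V_{n-1}||_\infty$ via contraction. The recursion $a_n\le\gamma\xi+\gamma a_{n-1}$ with $a_0=0$ unrolls to the geometric sum $\gamma\xi(1-\gamma^{[n]})/(1-\gamma)$, exactly the claimed bound. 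For (ii) I would use the standard fixed-point estimate: since $V^*=\cT[V^*]$ and $\cT$ is a $\gamma$-contraction, $(1-\gamma)||\widetilde V_n-V^*||_\infty \le ||\widetilde V_n-\cT[\widetilde V_n]||_\infty$. I then bound the residual by inserting $\widetilde{\cT}[\widetilde V_n]=\widetilde V_{n+1}$, giving $||\widetilde V_n-\cT[\widetilde V_n]||_\infty \le ||\widetilde V_n-\widetilde V_{n+1}||_\infty + ||\widetilde{\cT}[\widetilde V_n]-\cT[\widetilde V_n]||_\infty$. The first term is at most $3\epsilon(1-\gamma)/4$ by the stopping condition and the second is at most $\gamma\xi\le\epsilon(1-\gamma)^2/4\le\epsilon(1-\gamma)/4$ by the hypothesis on $\xi$; adding and dividing by $1-\gamma$ yields $||\widetilde V_n-V^*||_\infty\le\epsilon$.

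The main obstacle is (iii), which controls the softmax rather than the value function. Here I would first bound the logit gap: the difference of the exact logit $h(a,s|V^*)$ and the approximate one $z(a,s|\widetilde V,\overline{\bq})$ equals $\gamma(\min_{\bq_{sa}}\bbE_{\bq_{sa}}[V^*(s')]-\bbE_{\overline{\bq}_{sa}}[\widetilde V(s')])$; splitting this through the intermediate quantity $\min_{\bq_{sa}}\bbE_{\bq_{sa}}[\widetilde V(s')]$ and using $||V^*-\widetilde V||_\infty\le\epsilon$ together with the $\xi$-approximation bounds it by $\gamma(\epsilon+\xi)\le\epsilon+\xi$. The delicate part is converting this uniform logit perturbation into a bound on the softmax outputs. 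For logit vectors with $|u_a-v_a|\le c$, both the numerator ratio $\exp(u_a-v_a)$ and the partition-function ratio $\sum_{a'}\exp(v_{a'})/\sum_{a'}\exp(u_{a'})$ lie in $[e^{-c},e^{c}]$, so each probability ratio $\pi^*(a|s)/\widetilde\pi(a|s)$ lies in $[e^{-2c},e^{2c}]$ — the factor of two arising precisely from compounding these two effects. Since probabilities are bounded by $1$, this gives $|\pi^*(a|s)-\widetilde\pi(a|s)|\le e^{2c}-1$ with $c=(\epsilon+\xi)/\eta$, which is the stated bound.

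I expect the partition-function ratio estimate to be the one place requiring genuine care: it is what produces the $2$ in the exponent and is the step that distinguishes this argument from the purely value-based error analyses of standard (unregularized) robust MDP, where the Bellman operator does not involve the \emph{log} and \emph{exp} nonlinearities. The remaining steps are routine once the single-step bound $||\widetilde{\cT}[W]-\cT[W]||_\infty\le\gamma\xi$ and the nonexpansiveness of log-sum-exp are in hand.
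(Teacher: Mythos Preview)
Your proposal is correct and follows essentially the same route as the paper: the single-step bound $\|\widetilde{\cT}[W]-\cT[W]\|_\infty\le\gamma\xi$ via nonexpansiveness of log-sum-exp, the inductive unrolling for (i), the fixed-point residual estimate $(1-\gamma)\|\widetilde V_n-V^*\|_\infty\le\|\widetilde V_n-\cT[\widetilde V_n]\|_\infty$ for (ii), and the logit-gap plus softmax-ratio analysis for (iii) all match the paper's proof. Your treatment of (ii) is in fact slightly more direct---the paper records an auxiliary recursion for $\|\widetilde{\cT}^n[V]-\widetilde{\cT}^{n-1}[V]\|_\infty$ (its Claim~3), but uses it only to argue that the stopping condition is eventually satisfied, not for the bound itself, which it derives exactly as you do.
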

Theorem \ref{th:ER-approx-CM}-(i)  is useful to analyze the error propagation of value iteration after a certain number of iterations. This bound also holds for policy evaluation. Theorem \ref{th:ER-approx-CM}-(ii) answers the questions of when we should stop the value iteration algorithm to achieve a certain level of accuracy, and Theorem \ref{th:ER-approx-CM}-(iii) shows an approximation error of  the approximate  optimal policy. We also see that one needs  $\cO(\ln\epsilon^{-1})$ iterations to get an approximation in {(ii)}.
We now analyze the computational complexity to get $\epsilon$-approximations of the value function $V^*$ and the optimal policy $\bpii^*$.
According to Theorem \ref{th:ER-approx-CM}-(ii) and analyses from Section \ref{sec:uncertainty-model},  to get an $\epsilon$-approximation of the value function, it would require a worst-case complexity of $\cO(|\cS||\cA|\max_s \{N_s\} (\ln \epsilon^{-1})^2)$ for uncertainty sets based on a single KL divergence bound, and  $\cO(|\cS||\cA|(\max_s \{N_s\})^{7/2} (\ln \epsilon^{-1})^2)$ for the case of several KL divergence bounds, where $N_s$ is the number of states that can be reached from state $s\in\cS$, which is typically much smaller than $|\cS|$. 
On the other hand, to get an $\epsilon$-approximation of  $\bpii^*$, using Theorem \ref{th:ER-approx-CM}-(ii)-(iii),  the computation would need  complexities of $\cO(|\cS||\cA|\max_s \{N_s\}\ln \epsilon^{-1}\ln((\ln (\epsilon+1))^{-1} ) )$ and $\cO(|\cS||\cA|(\max_s \{N_s\})^{7/2}\ln \epsilon^{-1} \ln((\ln (\epsilon+1))^{-1} ) )$ for the cases of single KL bound and several KL bounds, respectively
Note that the robust (unregularized) MDP problem has the same complexity bounds, and  under non-robust ER-MDP the complexity of getting an $\epsilon$- approximation of the value function becomes   $\cO(|\cS||\cA|(\max_s \{N_s\})\ln \epsilon^{-1})$. Thus, adding robustness to ER-MDP yields an extra computational cost  of $\cO(\ln \epsilon^{-1})$ for the case of single KL bound and $\cO((\max_s{N_s})^{5/2}\ln \epsilon^{-1})$ in the case of several KL bounds. 

In the $(s)$-rectangularity case, we perform the Bellman update by solving  \eqref{eq:mapping-inner-s-rect}. 
This is a convex optimization problem and, under some conventional settings, can be solved in polynomial time. Section \ref{sec:uncertainty-model} below  shows that a $\xi$-approximation of the inner minimization problem (with uncertainty sets of several KL bounds) can be achieved with complexity $\cO(N_s^{7/2} \ln\xi^{-1})$. As a result, it would require a complexity of $\cO(|\cS|(\max_s \{|\cA|N_s\})^{7/2} (\ln \epsilon^{-1})^2)$ to have an $\epsilon$-approximation of $V^*$, and a complexity of $\cO(|\cS|(|\cA|\max_s \{N_s\})^{7/2}\ln \epsilon^{-1}\ln((\ln (\epsilon+1))^{-1} ))$ to have an $\epsilon$-approximation of the optimal policy.


\section{Applications}
\label{sec:related-algo}
We discuss relevant frameworks and algorithms that would make use of our results. This shows broad benefits of using the robust ER-MDP formulations in different contexts.

\textbf{{IRL/Imitation Learning under Uncertainty.}}
The  (robust) ER-MDP framework yields soft/randomized optimal policies, making it appealing for imitation learning/IRL
\citep{ziebart2010_IRL_Causal,Ho2016generative,Yu2019multi}, as one can conveniently formulate the reward learning problem as maximum likelihood estimation. 
In general, the robust model will be useful under the assumption that the experts who give demonstrated decisions do not know the transition probabilities with certainty, and make \textit{robust} decisions (looking at worst-case  scenarios), noting that the problem of how to make robust decisions in uncertain environments has been  widely investigated in the literature \citep{White1994ROMDP,Iyengar2005robustDP,Nilim2004robustness,Lim2013RL_RMDP}. 
So, it would be valuable to have an IRL algorithm that is able to learn a reward function from expert's robust decisions. 
Our results allow to cast the reward learning problem as maximum likelihood estimation as in the standard case.  A detailed robust IRL algorithm with a complexity analysis are provided in the supplementary material. We provide numerical experiments in Section \ref{sec:expr-IRL} below to demonstrate the benefits of having such a robust IRL algorithm to recover reward functions from robust policies. 
%
  

\textbf{{RL with KL Divergence Penalties.}}
Solving an ER-MDP problem can provide an optimal policy that is not \textit{too far} from a given  policy.  In a planning context, one might be interested in finding a policy that is not far from a given pre-computed policy $\overline{\bpii}$. This policy $\overline{\bpii}$ may be an outcome of a robust (unregularized) MDP model, but due to some changes to the system (e.g. reward function or uncertainty set $\cQ$), one might need to recompute the robust optimal policy without ending up with a completely new one. To this end, we can penalize the reward function by a KL divergence between the old policy $\overline{\bpii}$ and the new one $\KL(\bpii_s||\overline{\bpii}_s) = \sum_{a}\pi(a|s)\ln \frac{\pi(a|s)}{\overline{\pi}(a|s)}$ , or solve a Markov problem with constraints  $\KL(\bpii_s||\overline{\bpii}_s) \leq \beta$, for a scalar $\beta\geq 0$. The uses of KL divergence penalties or constraints are generally equivalent due to the fact that 
the function $\KL(\bpii_s||\overline{\bpii}_s)$ 
can be moved to the objective function using Lagrange duality. We then can solve following robust ER-MDP problem to obtain a new optimal policy
\begin{equation}
\label{eq:OfflineMDP-KL}
\max_{\substack{\bpii^t\in\Delta^\pi\\t=0,\ldots }}\min_{\substack{\bq^t\in\cQ\\t=0,\ldots }} \Bigg\{ \bbE_{\bPi,\bQ}\Big[\sum_{t=0}^{\infty}\gamma^{[t]} (r(a_t|s_t)  - \eta \KL(\bpii^t_{s_t}||\overline{\bpii}_{s_t})\Big]\Bigg\},
\end{equation}
which yields robust regularized Bellman equation $\cT[V] = \max_{\bpii_s}\min_{\bq_s}\{\bbE[r(a|s)-\eta \ln \frac{\pi(a|s)}{\overline{\pi}(a|s)} + \gamma\bbE_{\bq_{sa}}[V(s')]]\}$. Clearly, if we let $\eta\rightarrow \infty$ then  the optimal solution to  \eqref{eq:OfflineMDP-KL} should approaches $\overline{\bpii}$ and if $\eta\rightarrow 0$ then we retrieve the robust unregularized MDP model.  

ER-MDP has been also used in policy iteration to prevent early convergence to sub-optimal
policies, e.g., \citep{Abdolmaleki2018maximum,Abdolmaleki2018relative,Mankowitz2019robust}. So, it would be useful to use it to compute an optimal policy of a robust (unregularized) MDP while solving a robust regularized Bellman equation at each greedy step of a policy iteration algorithm. To facilitate the idea, let consider the modified policy iteration (MPI) approach \citep{Puterman1978modified}. Under our uncertainty settings, at an iteration $k$ of the robust MPI algorithm, we need to perform
\begin{align}
    (i)\ \bpii^{k+1}_{s} = \text{argmax}_{\bpii_s} \Bigg\{ \min_{\bq_{s}}&\Big\{ \bbE_{\bpii_s}\left[r(a|s)+ \bbE[V^{k}(s')]\right]\Big\}- \eta \KL(\bpii_{s}||\bpii^{k}_s)\Big]\Bigg\}\nonumber \\
    (ii)\ V^{k+1} =(\cT^{\UR,\bpii^{k+1}})^{m}&[V^{k}],\ \nonumber 
\end{align}
where
$
 \cT^{\UR,\bpii^{k+1}} = \min_{\bq_s}\bbE_{\bpii^{k+1}_s,\bq_s}[r(a|s)+\gamma V(s')].
$
Here, $\cT^{\UR,\bpii^k}$ is the robust (unregularized) Bellman update under policy $\bpii^k$ and the entropy term $\eta \KL(\bpii_{s}||\bpii^{k}_s)$ is used to control the distance between $\bpii^k$ and $\bpii^{k+1}$. Note that robust policy iteration algorithms without the KL entropy terms have been studied in some previous work \citep{Kaufman2013robust,Ho2018fast_RMDPs}. Now, from an initial policy $\bpii^0$, the algorithm iteratively find new policy by performing the robust regularized step (i) and robust (unregularized) policy evaluation step (ii). 
Clearly, if $m=1$ we retrieve the robust value iteration considered in Section \ref{subsec:VI}, and with a sufficiently large $m$  we retrieve a policy evaluation step. Since we only solve the inner minimization approximately, it is important to look at the approximation errors of Steps (i) and (ii).    
Theorem \ref{th:ER-approx-CM}-(iii) tells us that if we solve the inner minimization with approximation error $\epsilon>0$, then we can obtain a policy $\tilde{\bpii}^{k+1}$ with approximation error $e^{2\epsilon/\eta}-1$. For Step (ii), the bound in Theorem  \ref{th:ER-approx-CM}-(i) applies to an approximate Bellman update $\widetilde{\cT}^{\UR,\bpii^k}$ where the \textit{min} problem is solved approximately. Hence,  the approximation error for  Step (ii) is $\epsilon\gamma (1-\gamma^m)/(1-\gamma)$. These bounds  allow to analyze the error propagation of  the robust MPI (and thus, its convergence and rate of convergence), analogously to  non-robust MPI algorithms \citep{Scherrer2015approximate,Geist2019theory}, and bound the complexity required for the two steps to get a certain level of accuracy. 

\textbf{{Robust General Regularized MDP}}.
Beside  entropy-regularized models, other types of regularizers have been considered. For example, \cite{Lee2018sparse} propose to use Tsallis entropy with the motivation of
having  sparse optimal policies. \cite{Geist2019theory} study a general version of the ER-MDP model by replacing the entropy terms  by any convex functions of $\bpii_s$. 
It is possible to show that the basic properties mentioned in Theorem  \ref{th:ER-approx-CM} still hold for the robust version of that general model, but the robust Bellman update might have no closed-forms and would be more difficult to perform.
In some cases, one may only do it approximately, thus producing an additional level of approximation to  value/policy iteration. We briefly discuss these in the supplementary (Section \ref{sec:extension-rMDPs}). 

\section{Uncertainty Models}\label{sec:uncertainty-model}
A key issue when solving robust MDP problems is to efficiently solve the adversary's minimization problems.
We discuss this under both $(s,a)$- and $(s)$-rectangularity cases, noting that the objective function in the latter case is exponential.  We focus on uncertainty sets based on KL divergence (relative entropy or likelihood models) due to their appealing statistical properties in modeling uncertainties \citep{Nilim2004robustness,Iyengar2005robustDP}.\ Previous studies show that if the uncertainty set involves only one KL divergence bound, then in the  $(s,a)$-rectangularity  case, the inner minimization can be solved efficiently by bisection. We further extend these results by examining the $(s)$-rectangularity case and uncertainty sets based on several KL divergence bounds, with the motivation of better use the availability of historical data  and migrate the conservativeness of the uncertainty sets. 
We present our main results below and refer the reader to the supplementary (Section \ref{proof:Complexity}) for detailed proofs. 

\textbf{{$(s,a)$-rectangular uncertainty sets}}.
In this setting,  the objective function of the inner minimization problem is linear in $\bq_{sa}$. In a likelihood model  or relative entropy model, the uncertainty sets $\cQ_s$ are determined by KL divergence constraints of the forms $\sum_{s'}\hat{q}(s'|s,a)\ln q(s'|s,a)\geq \beta$ or $\sum_{s'}q(s'|s,a) \ln (q(s'|s,a)/\hat{q}(s'|s,a))\leq \beta$, respectively, where $\hat{q}(s'|s,a)$ is an empirical estimate of the transition probability associated with states $s,s'\in\cS$ and action $a\in\cA$. In \cite{Iyengar2005robustDP} the authors show that if  $\cQ_{sa}$ is defined based on only one KL bound, then the inner problem can be solved by bisection with complexity $C(\xi) = \cO(N_s\ln\xi^{-1})$. If we define uncertainty sets using several KL bounds, the bisection no-longer works. However, using some results from convex programming \citep{nemirovski2004interior}, we can show that if the number of KL bounds is much less than $N_s$ (which is typically the case), then the inner problem can be solved by interior-point  with complexity $\cO(N_s^{7/2}\ln\xi^{-1})$.

\textbf{{$(s)$-rectangular uncertainty sets.}}
In this case, the inner minimization problems involve exponential (convex) objective function: $\min_{\bq_s}\sum_a\exp(r(a|s)+ \bbE_{\bq_s}[V(s')])$, making it not solvable by bisection, even with uncertainty sets of only one KL bound. In this context, the problem still can be solved efficiently by interior-point and  it is possible to show that, if the number of KL bounds is much smaller than $N_s$, then the complexity can be bounded by $\cO((|\cA|N_s)^{7/2} \ln \xi^{-1})$, for which we provide a detailed proof in the supplementary. If the number of KL bounds is significant, then we also provide a detailed bound for the complexity in the supplementary material. 
It is worth noting that in a general regularized MDP model \citep{Geist2019theory}, there might be no closed-form for 
$\max_{\bpii_s}\{\cdot\}$ problems, thus one needs   to solve $\max_{\bpii_s}\min_{\bq_s}\{\cdot\}$  to perform the Bellman update, for which a  saddle-point algorithm would be useful \citep[e.g.][]{Gidel2016frank}, but the complexity is not easy to bound. 

\vspace{-0.3cm}
\section{Experiments with Robust IRL}
\label{sec:expr-IRL}
We provide numerical experiments to demonstrate the application of our robust ER-MDP models and algorithms in IRL
Here we focus on IRL, noting that  extensive experiments for a robust soft-RL were provided in \cite{Mankowitz2019robust}.   
We employ the MaxEnt algorithm \citep{Ziebart2008maximum}, one of the most popular IRL algorithms in the literature. 
We assume that the experts are uncertain about the dynamics  and make robust decisions and our aim is  to recover the experts' reward function from such robust decisions. In this context, the standard MaxEnt algorithm will ignore the uncertainties and tries to learn the reward function using a fixed vector of transition probabilities, and our robust version (named as Robust MaxEnt) will explicitly take the uncertainty issue into consideration.   

To evaluate how each algorithm performs, in analogy to prior IRL work \citep{Levine2011nonlinearIRL}, we  use the \textit{``expected value difference''} score, which measures how a learned policy performs under the true rewards.
We will evaluate  IRL outputs  on both environments on which they were learned and random environments (denoted by ``\textit{transfer}''). For the latter, we bring the learned parameters of the rewards to compute rewards and optimal policies in the new environments.  
We will test our robust IRL algorithm using two simulated environments, i.e., Objective-world and Highway Driving Behavior. Brief descriptions are given below.

The \textbf{Objectworld} is an $N\times N$ grid of states in which objects are randomly placed. Each object is assigned one of $C$ inner and outer colors. At each state, there are five possible actions corresponding to staying at the same place or stepping to four different directions (up, down, left, right). 
For the \textbf{Highway Driving Behavior} environment,  the task is to navigate a vehicle in a highway of three lanes with all vehicles moving with the same speed. The agent's vehicle can switch lanes and drive at up to four times speed of the traffic. Other vehicles (motorcycle or car) are civilian or police, and are placed randomly on the three lanes. The agent can make five different actions of changing lanes (left or right), speeding or slowing down, or staying at the same lane and same speed. 
Demonstrations are paths of length 8 generated by the true rewards, true dynamics and robust behavior. We generate 128 samples for each score measures  and we repeat the training and evaluation 8 times to compute the means and standard errors of the scores. We test the algorithms with two ways of generating expert trajectories, that is, standard unregularized MDP (\textit{deterministic policy}) and ER-MDP (\textit{stochastic policy}). 
We use the code and data used in  \cite{Levine2011nonlinearIRL} and keep the the same settings. The experiments were conducted using a PC running Window 10 with Intel(R) CoreTM i7-7700HQ CPU (2.80Hz) and 16 GB RAM.

We define the uncertainty sets as
$
\cQ_{sa} = \{\bq_{sa}\; |  \KL(\bq_{sa}||\bq^0_{sa}) \leq \epsilon \},\;\forall s\in\cS, a\in\cA
$
where  $\bq^0_{sa}$ are the ``\textit{true}'' transition probabilities, noting that these true values are not known by the experts
and we used $\bq^0_{sa}$ to
compute the \textit{expected value difference} scores. In this context, $\epsilon$ represents an uncertainty level. That is, larger $\epsilon$ will correspond to more uncertain expert behavior. We vary $\epsilon$ from $0$ to $0.1$ to show the performance of the  robust IRL algorithm with different $\epsilon$.     

\begin{figure}[htb] 
\centering
    \includegraphics[width=0.8\linewidth]{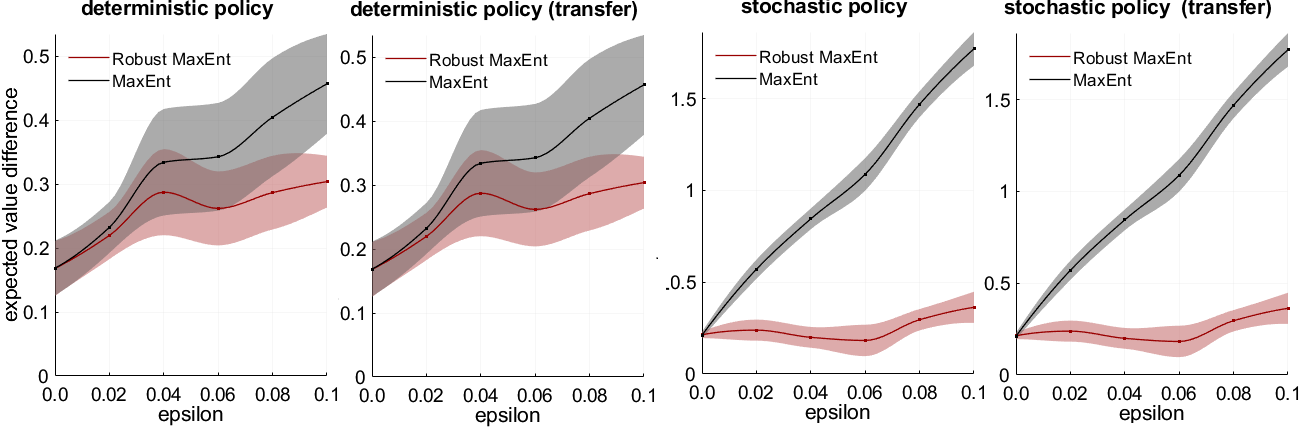} 
    \vspace{-0.3cm}
    \caption{\small Experiments with  objectworld, solid curves show the mean and the shading shows the standard errors.  } 
    \label{fig:obj} 
\end{figure}

\begin{figure}[htb] 
\centering
    \includegraphics[width=0.8\linewidth]{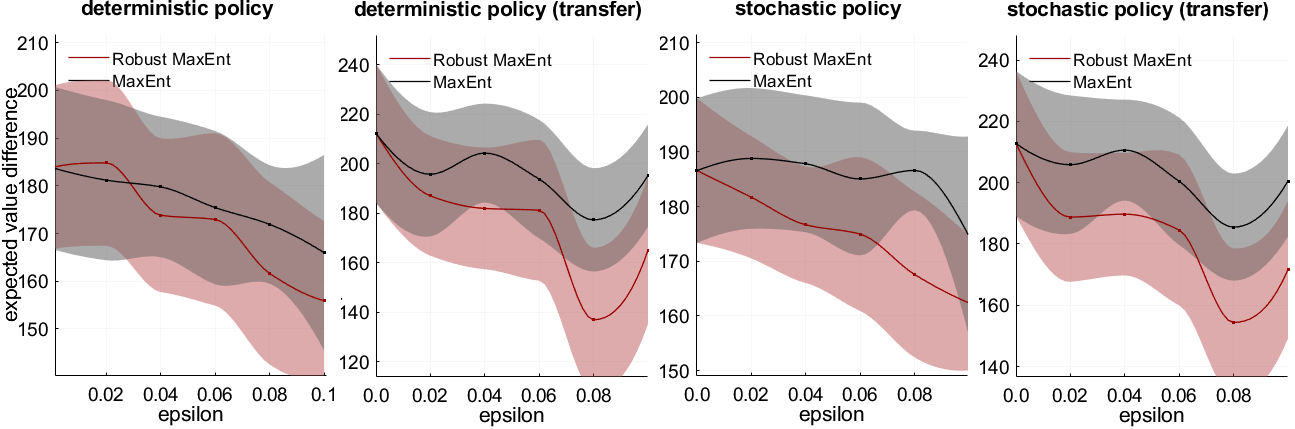} 
    \vspace{-0.3cm}
    \caption{\small Highway driving behavior experiments, solid curves shows the means and shading shows standard errors.  } 
    \label{fig:high} 
\end{figure}

The means and standard errors  of the expected value difference scores for the Objectworld and Highway Driving Behavior environments  are plotted in Fig. \ref{fig:obj} and Fig. \ref{fig:high}, in which the lower the better. It is clear that the Robust MaxEnt constantly outperforms the standard MaxEnt for all the tests, especially for the Objectworld example. The performance gap also increases when $\epsilon$  grows, demonstrating the consequences of ignoring the uncertainty issues in IRL. 
\vspace{-0.3cm}
\section{Conclusion}
\label{sec:conclude}
We study a robust ER-MDP model, aiming at taking the advantages of both robust MDP and ER-MDP schemes to develop new algorithms for robust decision-making and learning. 
We show that several properties that hold in the robust- and ER-MDP models also hold in the robust ER-MDP one. 
From that,
we look at the computation of robust optimal policies and providing computational complexity and error propagation analyses.
We  show how our robust framework can be used to design robust IRL and robust policy iteration algorithms under dynamics uncertainty. We provide numerical experiments to demonstrate the application of our frameworks/algorithms in the context of IRL.

{In this paper we focus on planing settings, i.e., all the information of the MDP is given, except that the dynamics are only known partially. In the context that the environment is unknown and one needs to interact with it to make policies, the algorithms and approximation bounds would need additional work and we keep this for future work. Our robust model might be conservative, in the sense that we assume the dynamics can take any values in the uncertainty set and the uncertainty set needs to satisfy some rectangularity assumptions. Some ways to relax these assumptions are to use distributionally robust approaches  \citep{Xu2010distributionally} and/or k-rectangular robust MDP \citep{Mannor2012lightning}, which would be promising for future work. } 

\bibliographystyle{plainnat}
\bibliography{refs}

\begin{thebibliography}{39}
\providecommand{\natexlab}[1]{#1}
\providecommand{\url}[1]{\texttt{#1}}
\expandafter\ifx\csname urlstyle\endcsname\relax
  \providecommand{\doi}[1]{doi: #1}\else
  \providecommand{\doi}{doi: \begingroup \urlstyle{rm}\Url}\fi

\bibitem[Abdolmaleki et~al.(2018{\natexlab{a}})Abdolmaleki, Springenberg,
  Degrave, Bohez, Tassa, Belov, Heess, and Riedmiller]{Abdolmaleki2018relative}
Abbas Abdolmaleki, Jost~Tobias Springenberg, Jonas Degrave, Steven Bohez, Yuval
  Tassa, Dan Belov, Nicolas Heess, and Martin Riedmiller.
\newblock Relative entropy regularized policy iteration.
\newblock \emph{arXiv preprint arXiv:1812.02256}, 2018{\natexlab{a}}.

\bibitem[Abdolmaleki et~al.(2018{\natexlab{b}})Abdolmaleki, Springenberg,
  Tassa, Munos, Heess, and Riedmiller]{Abdolmaleki2018maximum}
Abbas Abdolmaleki, Jost~Tobias Springenberg, Yuval Tassa, Remi Munos, Nicolas
  Heess, and Martin Riedmiller.
\newblock Maximum a posteriori policy optimisation.
\newblock \emph{arXiv preprint arXiv:1806.06920}, 2018{\natexlab{b}}.

\bibitem[Azar et~al.(2012)Azar, G{\'o}mez, and Kappen]{Azar2012dynamic}
Mohammad~Gheshlaghi Azar, Vicen{\c{c}} G{\'o}mez, and Hilbert~J Kappen.
\newblock Dynamic policy programming.
\newblock \emph{Journal of Machine Learning Research}, 13\penalty0
  (Nov):\penalty0 3207--3245, 2012.

\bibitem[Bloem and Bambos(2014)]{Bloem2014infiniteIRL}
Michael Bloem and Nicholas Bambos.
\newblock Infinite time horizon maximum causal entropy inverse reinforcement
  learning.
\newblock In \emph{53rd IEEE Conference on Decision and Control}, pages
  4911--4916. IEEE, 2014.

\bibitem[Brandt et~al.(2016)Brandt, Brill, and Suksompong]{Brandt2016Minimax}
Felix Brandt, Markus Brill, and Warut Suksompong.
\newblock An ordinal minimax theorem.
\newblock \emph{Games and Economic Behavior}, 95:\penalty0 107--112, 2016.

\bibitem[Eysenbach and Levine(2019)]{Eysenbach2019if}
Benjamin Eysenbach and Sergey Levine.
\newblock If {MaxEnt RL} is the answer, what is the question?
\newblock \emph{arXiv preprint arXiv:1910.01913}, 2019.

\bibitem[Fox et~al.(2015)Fox, Pakman, and Tishby]{Fox2015taming}
Roy Fox, Ari Pakman, and Naftali Tishby.
\newblock Taming the noise in reinforcement learning via soft updates.
\newblock \emph{arXiv preprint arXiv:1512.08562}, 2015.

\bibitem[Fu et~al.(2017)Fu, Luo, and Levine]{Fu2017Robust_IRL}
Justin Fu, Katie Luo, and Sergey Levine.
\newblock Learning robust rewards with adversarial inverse reinforcement
  learning.
\newblock \emph{arXiv preprint arXiv:1710.11248}, 2017.

\bibitem[Geist et~al.(2019)Geist, Scherrer, and Pietquin]{Geist2019theory}
Matthieu Geist, Bruno Scherrer, and Olivier Pietquin.
\newblock A theory of regularized markov decision processes.
\newblock \emph{arXiv preprint arXiv:1901.11275}, 2019.

\bibitem[Gidel et~al.(2016)Gidel, Jebara, and Lacoste-Julien]{Gidel2016frank}
Gauthier Gidel, Tony Jebara, and Simon Lacoste-Julien.
\newblock Frank-wolfe algorithms for saddle point problems.
\newblock \emph{arXiv preprint arXiv:1610.07797}, 2016.

\bibitem[Goodfellow et~al.(2014)Goodfellow, Pouget-Abadie, Mirza, Xu,
  Warde-Farley, Ozair, Courville, and Bengio]{Goodfellow2014GAN}
Ian Goodfellow, Jean Pouget-Abadie, Mehdi Mirza, Bing Xu, David Warde-Farley,
  Sherjil Ozair, Aaron Courville, and Yoshua Bengio.
\newblock Generative adversarial nets.
\newblock In \emph{Advances in neural information processing systems}, pages
  2672--2680, 2014.

\bibitem[Haarnoja et~al.(2017)Haarnoja, Tang, Abbeel, and
  Levine]{Haarnoja2017RL_ER}
Tuomas Haarnoja, Haoran Tang, Pieter Abbeel, and Sergey Levine.
\newblock Reinforcement learning with deep energy-based policies.
\newblock In \emph{Proceedings of the 34th International Conference on Machine
  Learning-Volume 70}, pages 1352--1361. JMLR. org, 2017.

\bibitem[Haarnoja et~al.(2018{\natexlab{a}})Haarnoja, Zhou, Abbeel, and
  Levine]{Haarnoja2018softa}
Tuomas Haarnoja, Aurick Zhou, Pieter Abbeel, and Sergey Levine.
\newblock Soft actor-critic: Off-policy maximum entropy deep reinforcement
  learning with a stochastic actor.
\newblock \emph{International conference on machine learning},
  2018{\natexlab{a}}.

\bibitem[Haarnoja et~al.(2018{\natexlab{b}})Haarnoja, Zhou, Hartikainen,
  Tucker, Ha, Tan, Kumar, Zhu, Gupta, Abbeel, et~al.]{Haarnoja2018softb}
Tuomas Haarnoja, Aurick Zhou, Kristian Hartikainen, George Tucker, Sehoon Ha,
  Jie Tan, Vikash Kumar, Henry Zhu, Abhishek Gupta, Pieter Abbeel, et~al.
\newblock Soft actor-critic algorithms and applications.
\newblock \emph{arXiv preprint arXiv:1812.05905}, 2018{\natexlab{b}}.

\bibitem[Ho and Petrik(2018)]{Ho2018fast_RMDPs}
Chin~Pang Ho and Marek Petrik.
\newblock Fast bellman updates for robust {MDPs}.
\newblock In \emph{ICML}, 2018.

\bibitem[Ho and Ermon(2016)]{Ho2016generative}
Jonathan Ho and Stefano Ermon.
\newblock Generative adversarial imitation learning.
\newblock In \emph{Advances in neural information processing systems}, pages
  4565--4573, 2016.

\bibitem[Iyengar(2005)]{Iyengar2005robustDP}
Garud~N Iyengar.
\newblock Robust dynamic programming.
\newblock \emph{Mathematics of Operations Research}, 30\penalty0 (2):\penalty0
  257--280, 2005.

\bibitem[Kaufman and Schaefer(2013)]{Kaufman2013robust}
David~L Kaufman and Andrew~J Schaefer.
\newblock Robust modified policy iteration.
\newblock \emph{INFORMS Journal on Computing}, 25\penalty0 (3):\penalty0
  396--410, 2013.

\bibitem[Lee et~al.(2018)Lee, Choi, and Oh]{Lee2018sparse}
Kyungjae Lee, Sungjoon Choi, and Songhwai Oh.
\newblock Sparse markov decision processes with causal sparse tsallis entropy
  regularization for reinforcement learning.
\newblock \emph{IEEE Robotics and Automation Letters}, 3\penalty0 (3):\penalty0
  1466--1473, 2018.

\bibitem[Levine et~al.(2011)Levine, Popovic, and
  Koltun]{Levine2011nonlinearIRL}
Sergey Levine, Zoran Popovic, and Vladlen Koltun.
\newblock Nonlinear inverse reinforcement learning with gaussian processes.
\newblock In \emph{Advances in Neural Information Processing Systems}, pages
  19--27, 2011.

\bibitem[Lim et~al.(2013)Lim, Xu, and Mannor]{Lim2013RL_RMDP}
Shiau~Hong Lim, Huan Xu, and Shie Mannor.
\newblock Reinforcement learning in robust markov decision processes.
\newblock In \emph{Advances in Neural Information Processing Systems}, pages
  701--709, 2013.

\bibitem[Mankowitz et~al.(2019)Mankowitz, Levine, Jeong, Abdolmaleki,
  Springenberg, Mann, Hester, and Riedmiller]{Mankowitz2019robust}
Daniel~J Mankowitz, Nir Levine, Rae Jeong, Abbas Abdolmaleki, Jost~Tobias
  Springenberg, Timothy Mann, Todd Hester, and Martin Riedmiller.
\newblock Robust reinforcement learning for continuous control with model
  misspecification.
\newblock \emph{arXiv preprint arXiv:1906.07516}, 2019.

\bibitem[Mannor et~al.(2004)Mannor, Simester, Sun, and
  Tsitsiklis]{mannor2004bias}
Shie Mannor, Duncan Simester, Peng Sun, and John~N Tsitsiklis.
\newblock Bias and variance in value function estimation.
\newblock In \emph{Proceedings of the twenty-first international conference on
  Machine learning}, page~72, 2004.

\bibitem[Mannor et~al.(2012)Mannor, Mebel, and Xu]{Mannor2012lightning}
Shie Mannor, Ofir Mebel, and Huan Xu.
\newblock Lightning does not strike twice: Robust mdps with coupled
  uncertainty.
\newblock \emph{arXiv preprint arXiv:1206.4643}, 2012.

\bibitem[Nemirovski(2004)]{nemirovski2004interior}
Arkadi Nemirovski.
\newblock Interior point polynomial time methods in convex programming.
\newblock \emph{Lecture notes}, 2004.

\bibitem[Nesterov and Nemirovskii(1994)]{Nesterov1994interior}
Yurii Nesterov and Arkadii Nemirovskii.
\newblock \emph{Interior-point polynomial algorithms in convex programming},
  volume~13.
\newblock Siam, 1994.

\bibitem[Nilim and El~Ghaoui(2004)]{Nilim2004robustness}
Arnab Nilim and Laurent El~Ghaoui.
\newblock Robustness in markov decision problems with uncertain transition
  matrices.
\newblock In \emph{Advances in Neural Information Processing Systems}, pages
  839--846, 2004.

\bibitem[Puterman(2014)]{Puterman2014markov}
Martin~L Puterman.
\newblock \emph{Markov decision processes: discrete stochastic dynamic
  programming}.
\newblock John Wiley \& Sons, 2014.

\bibitem[Puterman and Shin(1978)]{Puterman1978modified}
Martin~L Puterman and Moon~Chirl Shin.
\newblock Modified policy iteration algorithms for discounted markov decision
  problems.
\newblock \emph{Management Science}, 24\penalty0 (11):\penalty0 1127--1137,
  1978.

\bibitem[Scherrer et~al.(2015)Scherrer, Ghavamzadeh, Gabillon, Lesner, and
  Geist]{Scherrer2015approximate}
Bruno Scherrer, Mohammad Ghavamzadeh, Victor Gabillon, Boris Lesner, and
  Matthieu Geist.
\newblock Approximate modified policy iteration and its application to the game
  of tetris.
\newblock \emph{Journal of Machine Learning Research}, 16\penalty0
  (49):\penalty0 1629--1676, 2015.

\bibitem[Schulman et~al.(2015)Schulman, Levine, Abbeel, Jordan, and
  Moritz]{Schulman2015trust}
John Schulman, Sergey Levine, Pieter Abbeel, Michael Jordan, and Philipp
  Moritz.
\newblock Trust region policy optimization.
\newblock In \emph{International conference on machine learning}, pages
  1889--1897, 2015.

\bibitem[Schulman et~al.(2017)Schulman, Chen, and
  Abbeel]{Schulman2017equivalence}
John Schulman, Xi~Chen, and Pieter Abbeel.
\newblock Equivalence between policy gradients and soft {Q}-learning.
\newblock \emph{arXiv preprint arXiv:1704.06440}, 2017.

\bibitem[Viano et~al.(2020)Viano, Huang, Kamalaruban, Weller, and
  Cevher]{viano2020robust}
Luca Viano, Yu-Ting Huang, Parameswaran Kamalaruban, Adrian Weller, and Volkan
  Cevher.
\newblock Robust inverse reinforcement learning under transition dynamics
  mismatch.
\newblock \emph{arXiv preprint arXiv:2007.01174}, 2020.

\bibitem[White~III and Eldeib(1994)]{White1994ROMDP}
Chelsea~C White~III and Hany~K Eldeib.
\newblock Markov decision processes with imprecise transition probabilities.
\newblock \emph{Operations Research}, 42\penalty0 (4):\penalty0 739--749, 1994.

\bibitem[Wiesemann et~al.(2013)Wiesemann, Kuhn, and
  Rustem]{Wiesemann2013robustMDP}
Wolfram Wiesemann, Daniel Kuhn, and Ber{\c{c}} Rustem.
\newblock Robust markov decision processes.
\newblock \emph{Mathematics of Operations Research}, 38\penalty0 (1):\penalty0
  153--183, 2013.

\bibitem[Xu and Mannor(2010)]{Xu2010distributionally}
Huan Xu and Shie Mannor.
\newblock Distributionally robust markov decision processes.
\newblock In \emph{NIPS}, pages 2505--2513, 2010.

\bibitem[Yu et~al.(2019)Yu, Song, and Ermon]{Yu2019multi}
Lantao Yu, Jiaming Song, and Stefano Ermon.
\newblock Multi-agent adversarial inverse reinforcement learning.
\newblock In \emph{International Conference on Machine Learning}, pages
  7194--7201. PMLR, 2019.

\bibitem[Ziebart(2010)]{ziebart2010_IRL_Causal}
Brian Ziebart.
\newblock Modeling interaction via the principle of maximum causal entropy.
\newblock \emph{PhD thesis, Carnegie Mellon University}, 2010.

\bibitem[Ziebart et~al.(2008)Ziebart, Maas, Bagnell, and
  Dey]{Ziebart2008maximum}
Brian~D Ziebart, Andrew~L Maas, J~Andrew Bagnell, and Anind~K Dey.
\newblock Maximum entropy inverse reinforcement learning.
\newblock In \emph{AAAI}, volume~8, pages 1433--1438. Chicago, IL, USA, 2008.

\end{thebibliography}

\appendix

\clearpage

\begin{center}
\textbf{\huge Supplementary Material}
\end{center}

\section{Proofs of Main Results}
\subsection{Proof of Theorem \ref{th:main-sa-rec}}
\label{sec:proof:th31}

\textbf{Contraction property. }
We prove  the \text{\textit{Contraction}} property.
For any $s\in \cS$, let us consider two cases $\cT[V](s)\geq \cT[V'](s)$ or $\cT[V](s)< \cT[V'](s)$.
If $\cT[V](s) \geq \cT[V'](s)$. For any $\epsilon>0$, let $\bpii^* \in \Delta^\pi$ be  a solution such that
\[
\cT[V](s) \leq \min_{\bq_{s}\in\cQ_{s}}\Bigg\{ \bbE_{\bpii^*} \left[ r(a|s) -\eta\ln \pi^*(a|s) + \gamma \bbE_{s'\sim \bq_{sa}} [V(s')] \right]  \Bigg\} +\epsilon.
\]
Since, $\cT[V](s) \geq \cT[V'](s)$, we have
\begin{align}
    |\cT[V](s)-\cT[V'](s)| &\leq \min_{\bq_{s}\in\cQ_s}\Bigg\{ \bbE_{\bpii^*_s} \left[ r(a|s) -\eta\ln \pi^*(a|s) + \gamma \bbE_{s'\sim q} V(s') \right]  \Bigg\}+\epsilon \nonumber \\
    &\qquad \qquad - \min_{\bq_{s}\in\cQ_s}\Bigg\{ \bbE_{\bpii^*_s} \left[ r(a|s) -\eta\ln \pi^*(a|s) + \gamma \bbE_{s'\sim q} V'(s') \right]  \Bigg\}\nonumber \\
    &= \min_{\bq_{s}\in\cQ_s}\Bigg\{ \gamma \bbE_{\bpii^*_s, s'\sim \bq_s} [V(s')] \Bigg\} - \min_{\bq_{s}\in\cQ_s}\Bigg\{ \gamma \bbE_{\bpii^*_s, s'\sim \bq_s} [V'(s')] \Bigg\} +\epsilon. \label{eq:eq2}
\end{align}
We see that $\min_{\bq_{s}\in\cQ_s}\left\{ \gamma \bbE_{\bpii^*_s, s'\sim \bq_s} V(s') \right\} \leq  \min_{\bq_{s}\in\cQ_s}\left\{ \gamma \bbE_{\bpii^*_s, s'\sim \bq_s} V'(s') \right\}$, so if we denote by $\bq^*_s \in\cQ_s$ a  solution such that
\[
\min_{\bq_{s}\in\cQ_s}\left\{ \gamma \bbE_{\bpii^*_s, s'\sim \bq_s} [V'(s')] \right\} \geq \bbE_{\bpii^*_s, s'\sim \bq^*_s} [V'(s')] -\epsilon, 
\]
then from \eqref{eq:eq2}  we have
\[
|\cT[V](s)-\cT[V'](s)|\leq \gamma \bbE_{\bpii^*_s, s'\sim \bq^*_s} (V(s')-V'(s')) +2\epsilon \leq \gamma ||V-V'||_\infty+2\epsilon, \forall s\in \cS.
\]
Let $\epsilon\rightarrow \infty$ we obtain $||\cT[V] - \cT[V']||_\infty \leq ||V-V'||_\infty$.
The case $\cT[V](s)<\cT[V'](s)$ can be done in a similar way. So $\cT[V]$ is a contraction. The contraction property of   $\cT^{\bpii}[V]$ can be proved in a similar way.

\textbf{Markov optimality. }
We will now prove that if $V^*$ is a unique solution to the contraction mapping $\cT[V] = V$, then $V^*$ will satisfies  $V^{*}(s) = \max_{\bpii^t \in \Delta^\pi, t=0,1,...} V^{\bpii}(s)$, $\forall s\in\cS$. To this end, 
 we first denote $h(a_t,s_t) = r(a_t|s_t)-\eta\ln\pi^t(a_t|s_t)$, $s\in\cS, a\in \cA$. For any policy $\bpii^0,\bpii^1,... \in\Delta^\pi$, from the definition of $\cT[V]$ we have
\begin{align}
    V^*(s) &\geq  \min_{\bq^0_s} \Big\{\bbE_{\bpii^0_s}\left[h(a_0,s_0) + \gamma \bbE_{s_1\sim \bq^0_{s_0a_0}}  [V^*(s_1)]\right]\Big\} \nonumber\\
    &\geq \min_{\bq^0,\bq^1}\left\{ \bbE_{\substack{\bpii^0,\bpii^1\\ \bq^0,\bq^1}}\left[h(a_0,s_0) + \gamma h(a_1,s_1)\right] +
     \gamma^2\bbE_{\bq^1} [V^*(s_2)| s_0=s]\right\}. \nonumber 
\end{align}
This leads to, for any $n\in\mathbb{N}$,
\begin{align}
    V^*(s) &\geq \min_{\bq^0,...,\bq^n}\left\{ \bbE_{\substack{\bpii^0,...,\bpii^n \\\bq^0,...,\bq^n}}\left[ \sum_{t=0}^n \gamma^{[t]} h(a_t,s_t) \Big| s_0=s  \right] + \bbE_{\substack{\bpii^t,\bq^t\\t=0,1,...,n}} \left[\gamma^{[n+1]} V^*(s_{n+1}) \Big|\ s_0=s\right] \right\} \nonumber \\
    & =  \min_{\bq^0,...}\Bigg\{ \bbE_{\substack{\bpii^t,\bq^t\\t=0,1,...}}\left[ \sum_{t=0}^\infty \gamma^{[t]} h(a_t,s_t) \Big| s_0=s  \right] + \bbE_{\substack{\bpii^t,\bq^t\\t=0,1,...,n}} \left[\gamma^{[n+1]} V^*(s_{n+1}) \Big|\ s_0=s\right] \nonumber \\
    & \qquad- \bbE_{\substack{\bpii^{n+1},... \\\bq^{n+1},...}}\left[ \sum_{t=n+1}^\infty \gamma^{[t]} h(a_t,s_t) \right]\Bigg\} \nonumber \\
    & \geq \min_{\bq^0,...}\left\{ \bbE_{\substack{\bpii^0,...\\\bq^0,...}}\left[ \sum_{t=0}^\infty \gamma^{[t]} h(a_t,s_t) \Big| s_0=s  \right]\right\} -  \gamma^{[n+1]} ||V^*||_\infty -  \frac{\gamma^{[n+1]}H}{1-\gamma}, \nonumber
\end{align}
where $||V^*||_\infty = \max_s V^*(s)$ and $$H = \max_{\substack{a_t\in \cA,s_t\in\cS\\ \bq^0,...,\bq^t}} \bbE_{\substack{\bpii^0,...,\bpii^t \\ \bq^0,...,\bq^t}}[r(a_t|s_t) - \eta\ln \pi(a_t|s_t)| s_0=s].$$ We can show that $H<\infty$ because
\begin{align}
H &\leq  \max_{\substack{a_t\in \cA,s_t\in\cS\\ \bq^0,...,\bq^t}} \bbE_{\substack{\bpii^0,...,\bpii^t \\ \bq^0,...,\bq^t}}[r(a_t|s_t) \big| s_0=s] + \bbE_{\substack{\bpii^0,...,\bpii^t \\ \bq^0,...,\bq^t}} [-\eta\ln \pi(a_t|s_t)| s_0=s]\nonumber\\
&\leq R + \max_{\substack{a_t\in \cA,s_t\in\cS\\ \bq^0,...,\bq^t}} -\eta\pi(a_t|s_t)\ln \pi(a_t|s_t)\nonumber \\
&\leq R + \eta/e,\label{eq:eq1}
\end{align}
where $R = \max_{a,s}r(a|s)$ and $e$  the base of the natural logarithm ($e\approx 2.7828$). Inequality \eqref{eq:eq1} is due to the fact that $-x\ln x\leq e^{-1}$ for all $x\in [0,1]$. So we have 
\begin{equation}
\label{eq:ineq1}
V^*(s) \geq \max_{\bpii^0,...}\min_{\bq^0,...} \left\{ \bbE_{\substack{\bpii^0,...\\\bq^0,...}}\left[ \sum_{t=0}^\infty \gamma^{[t]} h(a_t,s_t) \Big| s_0=s  \right]\right\} -  \gamma^{[n+1]} ||V^*||_\infty -  \frac{\gamma^{[n+1]}H}{1-\gamma}, 
\end{equation}
noting that $||V^*||_\infty$ is also finite.
On the other hand, there is a policy $\overline{\bpii}^0,\overline{\bpii}^1,\ldots $  such that, for any $\epsilon>0$ and any $s\in\cS$
\[
V^*(s) \leq \min_{\bq^0} \left\{\bbE_{\overline{\bpii}^0} \left[ h(a_0,s) + \gamma \bbE_{s'\sim \bq^0_{sa}} [V(s')]   \right] \right\} + \epsilon.
\]
We can expand the Bellman equation to obtain
\begin{align}
V^*(s)  &\leq \min_{\bq^0} \left\{\bbE_{\overline{\bpii}^0} \left[ h(a_0,s) + \gamma \bbE_{s_1\sim \bq^0_{s_0a_0}} [V(s_1)]   \right] \right\} + \epsilon  \nonumber \\
&\leq \min_{\bq^0,\bq^1} \left\{\bbE_{\substack{\bq^0,\bq^1 \\\overline{\bpii}^0, \overline{\bpii}^1}} \left[ h(a_0,s_0) + \gamma h(a_1,s_1) \big| s_0=s\right] + \gamma^2 \bbE_{\substack{\bq^0,\bq^1 \\\overline{\bpii}^0, \overline{\bpii}^1}}[V(s_2)\big| s_0=s]    \right\} + (1+\gamma)\epsilon.\nonumber
\end{align}
Thus, by continuing expanding the inequality, we have, for any $n\in\mathbb{N}$,
\begin{align}
V^*(s) &\leq \min_{\bq^0,...,\bq^n} \left\{\bbE_{\substack{\bq^0,...,\bq^n\\\overline{\bpii}^0,...,\overline{\bpii}^n}} \left[ \sum_{t=0}^n \gamma^{[t]} h(a_t,s_t) \big| s_0=s\right] + \gamma^{[n+1]} \bbE_{\substack{\bq^0,...,\bq^n\\\overline{\bpii}^0,...,\overline{\bpii}^n}}[V^*(s_{n+1})\big| s_0=s]    \right\} \nonumber\\& \qquad + \frac{1-\gamma^{[n+1]}}{1-\gamma}\epsilon \nonumber \\
&\leq \min_{\bq^0,...} \Bigg\{\bbE_{\substack{\bq^0,...\\\overline{\bpii}^0,...}} \left[ \sum_{t=0}^\infty \gamma^{[t]} h(a_t,s_t)\big| s_0=s \right] + \gamma^{[n+1]} ||V^*||_\infty -\nonumber \\&\qquad\qquad\qquad \bbE_{\substack{\bq^0,...\\\overline{\bpii}^0,...}} \left[ \sum_{t=n+1}^\infty \gamma^{[t]} h(a_t,s_t)\big| s_0=s \right]   \Bigg\}  + \frac{1-\gamma^{[n+1]}}{1-\gamma}\epsilon \nonumber \\
&\leq \min_{\bq^0,...} \left\{\bbE_{\substack{\bq^0,...\\\overline{\bpii}^0,...}} \left[ \sum_{t=0}^\infty \gamma^{[t]} h(a_t,s_t)\big| s_0=s \right] \right\}+ \gamma^{[n+1]} ||V^*||_\infty + \frac{\gamma^{[n+1]}H}{1-\gamma}   + \frac{1-\gamma^{[n+1]}}{1-\gamma}\epsilon. \nonumber
\end{align}
So we have
\begin{align}
V^*(s) 
&\leq \max_{\bpii^0,...} \min_{\bq^0,...} \left\{\bbE_{\substack{\bpii^0,...\\\bq^0,...}} \left[ \sum_{t=0}^\infty \gamma^{[t]} h(a_t,s_t) \big| s_0=s\right] \right\}+ \gamma^{[n+1]} ||V^*||_\infty + \frac{\gamma^{[n+1]}H}{1-\gamma}   + \frac{1-\gamma^{[n+1]}}{1-\gamma}\epsilon  \label{eq:ineq2}
\end{align}
From \eqref{eq:ineq1} and \eqref{eq:ineq2}, we can take $n\rightarrow \infty$ and $\epsilon$ arbitrarily small, we have 
\[
V^*(s) = \max_{\bpii^0,...} \min_{\bq^0,...} \left\{\bbE_{\substack{\bpii^0,...\\\bq^0,...}} \left[ \sum_{t=0}^\infty \gamma^{[t]} h(a_t,s_t) \Big| s_0=s\right] \right\},
\]
as desired. From the proof, we can also validate see that if $V^{\bpii}$ is a solution to the system $\cT^{\bpii}[V] = V$, then it also satisfies Eq. \ref{eq:define-V-pi}. 

\textbf{Perfect duality.} 
We move to the \text{\textit{Perfect Duality}} property. We first prove that this property holds for the Bellman update. The weak duality implies that 
\[
   \min_{\bq_s\in\cQ_s} \max_{\bpii_s\in \Delta^\pi_s} \psi_s(\bpii,\bq,V)  \geq  \max_{\bpii_s\in \Delta^\pi_s} \min_{\bq_s\in\cQ_s} \psi_s(\bpii,\bq,V)
\]
To prove the opposite inequality, for any $\epsilon>0$,  let  $\overline{\bq}_s \in\cQ_s$ be a transition solution such that
\[
\min_{\bq_{sa} \in\cQ_{sa}}\left\{\bbE_{s'\sim \bq_{sa}}\left[{V}(s')]\right]\right\} \geq \bbE_{s'\sim \overline{\bq}_{sa}}\left[{V}(s')\right] - \epsilon.
\]
We have the chain
\begin{align}
&\min_{\bq_s\in\cQ_s} \max_{\bpii_s\in \Delta^\pi_s} \psi_s(\bpii,\bq,V) \leq  \max_{\bpii_s\in \Delta^\pi_s} \psi_s(\bpii,\bar{\bq},V) \nonumber \\
&= \max_{\bpii_s\in \Delta^\pi_s} \Bigg\{\bbE_{\bpii_s}\left[ r(a|s) -\eta\ln\pi(a|s) +  \bbE_{s'\sim \overline{\bq}_{sa}}[{V}(s')]\right]\Bigg\}\nonumber\\
&\leq \max_{\bpii_s\in \Delta^\pi_s} \Bigg\{\bbE_{\bpii_s}\left[ r(a|s) -\eta\ln\pi(a|s) +  \min_{\bq_{sa}}\bbE_{s'\sim \bq_{sa}}[\overline{V}(s')]\right]\Bigg\} +  \epsilon\nonumber\\
& =  \max_{\bpii_s\in \Delta^\pi_s}  \min_{\bq_{s}\in\cQ_s} \Bigg\{\bbE_{\bpii_s}\left[ r(a|s) -\eta\ln\pi(a|s) + \bbE_{s'\sim \bq_{sa}}[{V}(s')]\right]\Bigg\} +  \epsilon.\nonumber
\end{align}
Let $\epsilon\rightarrow 0$ we obtain the opposite-side of the inequality, which implies the perfect duality for the Bellman update. To prove  the perfect duality for $\max_{\bpii}\min_{\bq} F_{\infty}(\bpii,\bq)$  we define the dual of the mapping $\cT[V]$ as $\widehat{\cT}[V] = \max_{\bpii}\min_{\bq} \psi_s(\bpii,\bq,V)$. Since $\cT[V] = \widehat{\cT}[V]$ for any $V\in\bbR^{|\cS|}$, they yield the same fixed point solution $V^*$. Now, similarly to the proof of the Markovian Optimality property, we can also show that $V^*$ will satisfy
\[
V^*(s) = \min_{\bq^0,...}\max_{\bpii^0,...}  \left\{\bbE_{\substack{\bpii^0,...\\\bq^0,...}} \left[ \sum_{t=0}^\infty \gamma^{[t]} h(a_t,s_t) \Big| s_0=s\right] \right\}.
\]
Combining with the \textit{Markov Optimality } shown in point (iv), we obtain the minimax equality
\[
\max_{\bpii^0,...}\min_{\bq^0,...} F_{\infty}(\bPi,\bQ) =  \min_{\bq^0,...}\max_{\bpii^0,...}F_{\infty}(\bPi,\bQ), 
\]
which completes the proof. 

To compute an optimal policy to the Markov problem, according to the Markov Optimality property of Theorem \ref{th:main-sa-rec}, we just need to find a solution by solving the Bellman equation $\cT[V] = V$. 
Given any state $s\in \cS$, we have
\begin{align}
    \max_{\bpii_s}\min_{\bq_{s}} &\Bigg\{\bbE_{\bpii}\left[ r(a|s) -\eta\ln\pi(a|s) + \bbE_{\bq_{sa}} [V(s')]\right]\Bigg\}\nonumber\\
    &=\max_{\bpii_s} \Bigg\{\bbE_{\bpii_s}\left[ r(a|s) -\eta\ln\pi(a|s) + \gamma \min_{\bq_{sa}} \bbE_{\bq_{sa}}[V(s')]\right]\Bigg\}.\nonumber
\end{align}
So we can write 
\begin{align}
    V(s) &= \max_{\bpii_s} \left\{\sum_{a\in\cA} {\pi}(a|s)\Big( r(|s)-\eta\ln {\pi}(a|s) + \delta(s,a)\Big) \right\}, \nonumber
\end{align}
where $\delta(s,a) =  \min_{\bq_{sa}} \sum_{s'\in\cS} q(s'|s,a) V(s')$ for notational brevity. Let consider the  maximization problem  
 \begin{align}
  J(s) = 	\underset{\bpii_s}{\text{max}}\qquad &  \sum_{a\in\cA} {\pi}(a|s)\Big( r(a|s)-\eta\ln {\pi}(a|s) + \delta(s,a)\Big) & \label{prob:max-entropy-state-1} \\
	 \text{subject to} \qquad & \sum_{a\in\cA}\pi(a|s) = 1 &  \nonumber\\
	  &  \pi(a|s)\geq 0,\ \forall a\in\cA.  &  \nonumber
\end{align}
We consider the Lagrange function
\[
L(\bpii_s,\beta) = \sum_{a\in\cA} {\pi}(a|s)\Big( r(a|s)-\eta\ln {\pi}(a|s) + \delta(s,a)\Big) - \beta\left(\sum_{a}\pi(a|s)-1\right).
\]
We see that if $\bpii^*_{s}$ is optimal to \eqref{prob:max-entropy-state-1}, then  $(\partial L(\pi(\cdot|s),\beta))/\partial \pi(a|s)=0$  at $\pi^*(a|s)$ for all $a\in\cA$, leading to the following equations
\begin{equation}
\label{eq:ER-optimalpolicy-finite}
\begin{cases}
\eta\ln \pi^*(a|s) = r(a|s) + \delta(s,a) - (\eta+\beta) \\
\sum_{a}\pi^*(a|s) = 1.
\end{cases}
\end{equation}
Hence, we have
\[
\begin{cases}
\pi^*(a|s) = (\exp(r(a|s)/\eta + \delta(s,a)/\eta))/{\exp(1+\beta/\eta)} \\
\exp(1+\beta/\eta) = \sum_{a}\exp(r(a|s)/\eta + \delta(s,a)/\eta)\\
J(s) = (1+\beta/\eta) = \ln\left(\sum_{a}\exp(r(a|s)/\eta + \delta(s,a)/\eta)\right). 
\end{cases}
\]
This leads to a closed form to compute the objective of the  maximization problem. The value of $V^t$ becomes
\[
\begin{aligned}
\cT[V](s) &= \eta\ln\left(\sum_{a}\exp\left(r(a|s)/\eta +  \frac{1}{\eta} \min_{\bq_{sa}}\sum_{s'\in\cS} q(s'|s,a) V(s')\right)\right)\\  &= \eta\ln\left(\sum_{a}e^{V(a|s)}\right),
\end{aligned}
\]
where
\[
V(a|s) = r(a|s)/\eta+\frac{1}{\eta} \min_{\bq_{sa}} \sum_{s'\in\cS} q(s'|s,a) V(s').
\]
The optimal policies $\pi^*(a|s)$ then becomes ${ \exp\left(V(a|s)\right)}/{\exp\left(\cT[V](s)/\eta\right)}$, according to \eqref{eq:ER-optimalpolicy-finite}. We obtain the desired equations for both  $\cT[V]$ and an 
optimal policy $\bpii^*$. 

\subsection{Proof of Theorem \ref{theor:optimal-policies-s-rect}}  
\textbf{Contraction property.}  We also consider two cases. If $\cT[V](s) \geq \cT[V'](s)$. For any $\epsilon>0$, let $\bpii^* \in \Delta^\pi$ be  a solution such that
\[
\cT[V](s) \leq \min_{\bq_{s}\in\cQ_{s}}\Bigg\{\psi_s(\bpii^*,\bq,V) \Bigg\} +\epsilon.
\]
We have
\begin{align}
   0\leq  \cT[V](s)-\cT[V'](s) &\leq \min_{\bq_{s}\in\cQ_s}\Big\{\psi_s(\bpii^*,\bq,V) \Big\}+\epsilon 
   - \min_{\bq_{s}\in\cQ_s}\Big\{\psi_s(\bpii^*,\bq,V')\Big\}. \label{eq:s-eq2}
\end{align}
So if we denote by $\bq^*_s \in\cQ_s$ a  solution such that
\[
\min_{\bq_{s}\in\cQ_s}\Big\{\psi_s(\bpii^*,\bq,V')\Big\} \geq \psi_s(\bpii^*,\bq^*,V')-\epsilon, 
\]
then from \eqref{eq:s-eq2}  we have
\[
|\cT[V](s)-\cT[V'](s)|\leq\psi_s(\bpii^*,\bq^*,V) - \psi_s(\bpii^*,\bq^*,V') +2\epsilon \leq \gamma ||V-V'||_\infty+2\epsilon, \forall s\in \cS.
\]
Let $\epsilon\rightarrow \infty$ we obtain $||\cT[V] - \cT[V']||_\infty \leq ||V-V'||_\infty$.  The case $\cT[V](s)< \cT[V'](s)$ is similarly proved.

Given the contraction property, the Markov Optimality (iv) can be validated similarly as in  the $(s,a)$-rectangularity case.

\textbf{Perfect duality. }
For the perfect duality property, noting that the variables  $\bpii_s$ in the adversary's problem  $\min_{\bq_s}\{\cdot\}$ cannot be  eliminated as in the $(s,a)$-rectangularity case. However, with the assumption that the uncertainty set is \textit{convex} and \textit{compact}, we can make use of the  \textit{von Neumann's minimax theorem        } \citep{Brandt2016Minimax} and the fact that function $\psi_s(\bpii,\bq,V)$ is linear in $\bpii_s$ and convex in $\bq_s$, 
to see that
\[
\max_{\bpii_s}\min_{\bq_s} \Big\{\psi_s(\bpii,\bq,V)\Big\} = \min_{\bq_s} \max_{\bpii_s} \Big\{\psi_s(\bpii,\bq,V)\Big\},
\]
Thus the Perfect Duality holds for Bellman equation $\cT[V]$, $\cT[V] = \widehat{\cT}[V]$ for all $V\in\bbR^{|\cS|}$. Using the contraction property, let $V^*$ be a unique fixed point solution to the systems  $\cT[V] = V$ and $\widehat{\cT}[V] = V$, then similarly to proof of Theorem \ref{th:main-sa-rec}-(v), we can show that the Perfect Duality also holds for  $\max_{\bpii^0,\ldots}\min_{\bq^0,\ldots}F_{\infty}(\bPi,\bQ)$. 

\textbf{Optimal policy. } 
The key issue/challenge when proving the formulation for the optimal policy is to show that if $(\bpii^*_s,\bq^*_s)$ is a solution to the  Bellman update $\max_{\bpii_s}\min_{\bq_s} \Big\{\psi_s(\bpii,\bq,V)\Big\}$, then is is also solution to the \textit{min-max} counterpart. It is not always the case even if the perfect duality (or minimax equality) holds. In this proof we show that this is actually the case. 

 First, to simplify the notations, let 
 \[
g(\bpii_s,\bq_s) = \bbE_{\bpii_s}\Bigg[r(a|s) - \ln \pi(a|s) + \gamma \bbE_{\bq_s}[V^*(s')] \Bigg],\ v(a,s|\bq_s) = \exp\left(r(a|s) +  \gamma \bbE_{\bq_s}\Big[V^*(s')\Big]\right).
\]
We see that  $(\bq^*_s,\bpii^*_s)$ specified in Theorem \ref{theor:optimal-policies-s-rect} is an optimal solution to the minimax problem $\min_{\bq_s \in\cQ_s}\max_{\bpii_s} \{g(\bpii_s,\bq_s)\}$.
We will show that $(\bq^*_s,\bpii^*_s)$ is also a saddle point of the minimax problem, thus also a solution to the \textit{max-min} counterpart. From the definition of $(\bq^*_s,\bpii^*_s)$ we have
$\bpii^*_s = \text{argmax}_{\bpii_s} g(\bpii_s,\bq^*_s)$. Now we prove that $\bq^*_s = \text{argmin}_{\bq_s} g(\bpii^*_s,\bq_s)$. From the definition of the optimal policy $\bpii^*_s$  we write
\begin{align}
    g(\bpii^*_s,\bq_s) = \sum_{a}\frac{v(a,s|\bq^*_s) \left(r(a|s) -\ln \pi^*(a|s)\right)}{\sum_{a'} v(a',s|\bq^*_s)} + \frac{\gamma \sum_a \sum_{s'} v(a,s|\bq^*_s)q(s'|a,s)V(s') }{\sum_{a'} v(a',s|\bq^*_s)}. \label{rect-s-optimal-policy-proof-eq0}
\end{align}
Recall that $\bq^*_s$ is an optimal solution to $\min_{\bq_s}\{h(\bq_s) = \sum_{a} v(a,s|\bq_s)\}$. 
Now, consider any point $\bq'_s\in\cQ_s$ and denote $\bdt_s = \bq'_s -\bq^*_s$. The convexity of $\cQ_s$ implies that $\bq^*_s+ \alpha \bdt_s \in\cQ$, for any $\alpha\in [0,1]$, and there exists $\beta\in [0,1]$ such that
\[
h(\bq^*_s + \alpha \bdt_s) - h(\bq^*_s) = \nabla_{\bq_s}h(\bq^*_s + \alpha\beta \bdt)^{\T}(\alpha \bdt_s), 
\]
where the equality is due to the fact that $h(\bq_s)$ is differentiable and  convex. This is also equivalent to
\begin{equation}
\label{eq:rect-s-optimal-policy--proof-eq1}
\nabla_{\bq_s}h(\bq^*_s + \alpha\beta \bdt)^\T \bdt_s = \frac{h(\bq^*_s + \alpha \bdt_s) - h(\bq^*_s)}{\alpha}. 
\end{equation}
Let $\alpha \rightarrow 0$, the left side of \eqref{eq:rect-s-optimal-policy--proof-eq1} converges to $\nabla_{\bq_s}h(\bq^*_s)^\T \bdt_s$ while the right side is always non-negative. As a result, we need to have $\nabla_{\bq_s}h(\bq^*_s)^\T \bdt_s \geq 0$. To show this more precisely, assume that $\nabla_{\bq_s}h(\bq^*_s)^\T \bdt_s < 0$, then the continuity of the left side of \eqref{eq:rect-s-optimal-policy--proof-eq1} implies that  there exists $\alpha$ small enough such that  $\nabla_{\bq_s}h(\bq^*_s + \alpha\beta \bdt)^\T \bdt_s<0$, which means $h(\bq^*_s + \alpha \bdt_s) < h(\bq^*_s)$, which is contrary to the definition of $\bq^*_s$. So, we have   $\nabla_{\bq_s}h(\bq^*_s)^\T \bdt_s \geq  0$ or   $\nabla_{\bq_s}h(\bq^*_s)^\T \bq'_s \geq \nabla_{\bq_s}h(\bq^*_s)^\T \bq^*_s$. Since we can choose $\bq'_s$ arbitrarily in $\cQ$, we have $\bq^*_s = \text{argmin}_{\bq_s} \nabla_{\bq_s}h(\bq^*_s)^\T \bq_s$. Moreover, 
\begin{align}
    \nabla_{\bq_s}h(\bq^*_s)^\T \bq_s = \gamma \sum_{a} v(a,s|\bq_s^*)\sum_{s'} V(s') q(s'|a,s). \label{rect-s-optimal-policy-proof-eq2}
\end{align}
Combine \eqref{rect-s-optimal-policy-proof-eq0}
 and \eqref{rect-s-optimal-policy-proof-eq2} and the recent claim that $\bq^*_s = \text{argmin}_{\bq_s} \nabla_{\bq_s}h(\bq^*_s)^\T \bq_s$, we have $\bq^*_s = \text{argmin}_{\bq_s} g(\bpii^*_s,\bq_s)$. As such, 
 $(\bq^*_s,\bpii^*_s)$ is also a saddle point of the minimax problem $\min_{\bq_s \in\cQ_s}\max_{\bpii_s} \{g(\bpii_s,\bq_s)\}$. We need one more step to prove that the policies determined in the theorem is optimal to the max-min problem $\max_{\bpii_s} \min_{\bq_s \in\cQ_s}\{g(\bpii_s,\bq_s)\}$. Using the property of the saddle point, for any policies $\bpii_s$ we have
 \begin{align}
     \min_{\bq_s \in\cQ_s}\{g(\bpii_s,\bq_s)\}&\leq g(\bpii_s,\bq^*_s) \leq \max_{\bpii_s} \{g(\bpii_s,\bq^*_s)\}\nonumber\\
      &= g(\bpii^*_s,\bq^*_s) = \min_{\bq_s}\{g(\bpii^*_s,\bq_s)\},\nonumber
 \end{align}
 which finally implies that $(\bpii^*,\bq^*)$ determined in the theorem is also optimal to the original \textit{max-min} problem. we complete the proof.   

\subsection{Proof of Theorem \ref{th:ER-approx-CM}}
	We first prove the following result. Let define a function $f:\bbR^I\rightarrow \bbR$ as $f(\bx) = \ln\left(\sum_{i=1}^I e^{x_i}\right)$. Given any vector $\bx,\by\in \bbR^{I}$, the mean value theorem implies that there is $\bz = \alpha \bx+ (1-\alpha)\by$, $\alpha \in [0,1]$, such that
	\begin{equation}
	\label{eq:proof-th-approx-eq1}
	|f(\bx) - f(\by)| = |\nabla f(\bz)^\T (\bx-\by) | \leq \sum_{i\in[I]}||\bx-\by||_\infty \frac{e^{z_i}}{\sum_{i'\in[I]}e^{z_{i'}}} = ||\bx-\by||_{\infty}.
	\end{equation}
	
\textbf{For (i)},  we  first have, for any $s\in \cS$, we have
\[
{\cT}[V](s) \leq  \widetilde{\cT}[V](s)  \leq  \eta\ln\left(\sum_{a\in \cA}\exp\Big(r(a|s)/\eta +  \gamma  \min_{\bq_{sa}} \left\{\sum_{s'\in\cS} {q}(s'|a,s)V(s')\right\}/\eta + \gamma\xi/\eta \Big) \right).
\]
So, the inequality in \eqref{eq:proof-th-approx-eq1} tells us that
{\small \begin{align}
    |{\cT}[V](s)  - \widetilde{\cT}[V](s)|&\leq \left|{\cT}[V](s) -\eta \ln\left(\sum_{a\in \cA}\exp\Big(r(a|s)/\eta +  \gamma  \min_{\bq_{sa}} \left\{\sum_{s'\in\cS} {q}(s'|a,s)V(s')\right\}\eta + \gamma\xi/\eta \Big) \right)  \right| \nonumber \\
    \leq \gamma\xi, \nonumber
\end{align}}
which  means 
\begin{equation}
\label{eq:proof-th-approx-eq2}
||{\cT}[V]  - \widetilde{\cT}[V]||_\infty \leq \gamma \xi.    
\end{equation}
Moreover, using the triangle inequality, We can further write
\begin{align}
    ||{\cT}^n[V]  - \widetilde{\cT}^n[V]||_\infty &\leq  ||\widetilde{\cT}^n[V] -\cT[\widetilde{\cT}^{n-1}[V]]||_\infty +  ||\cT[\widetilde{\cT}^{n-1}[V]] - \cT^n[V]||_\infty\nonumber \\
    &\stackrel{(*)}{\leq}  \gamma \xi + \gamma ||{\cT}^{n-1}[V]  - \widetilde{\cT}^{n-1}[V] ||_\infty  \nonumber\\
    &\leq  \ldots \nonumber\\
    & \leq \gamma\xi (1+\ldots+\gamma^{[n-1]})\nonumber\\
    & = \gamma\xi (1-\gamma^{[n]})/(1-\gamma),\nonumber
\end{align}
where (*) is due to \eqref{eq:proof-th-approx-eq2}. This is the desired bound.

\textbf{For (ii),} we need the following chain of claims.
\begin{itemize}
    \item{\textbf{Claim 1:}} For any $V\in\bbR^{|\cS|}$
\begin{align}
||\cT[V]-V||_\infty \leq ||\widetilde{\cT}[V]-\cT[V]||_\infty + ||\widetilde{\cT}[V]-V||_\infty \leq \gamma \xi + ||\widetilde{\cT}[V]-V||_\infty 
\end{align}
\item{\textbf{Claim 2:}} For any $V\in\bbR^{|\cS|}$
\begin{align}
||\cT^n[V]-V||_\infty &\leq ||\cT^{n}[V]-\cT^{n-1}[V]||_\infty + ||\cT^{n-1}[V]-V||_\infty \nonumber \\
 &\leq  \gamma^{[n-1]}||\cT[V] - V||_\infty + ||\cT^{n-1}[V]-V||_\infty \nonumber \\
 &\leq ||\cT[V] - V||_\infty \left(1+\ldots+\gamma^{[n-1]}\right)\nonumber\\
 &=||\cT[V] - V||_\infty\frac{1-\gamma^{[n]}}{1-\gamma}.\nonumber
\end{align}
So
\[
||V - V^*||_\infty \leq ||\cT[V] - V||_\infty\frac{1}{1-\gamma}.
\]
\item{\textbf{Claim 3:}} For any $V\in\bbR^{|\cS|}$ and $n\in \mathbb{N}_+$
\begin{align}
||\widetilde{\cT}^n[V]-\widetilde{\cT}^{n-1}[V]||_\infty & \leq  ||\widetilde{\cT}^n[V]-\cT[\widetilde{\cT}^{n-1}[V]]||_\infty + ||\widetilde{\cT}^{n-1}[V]-\cT[\widetilde{\cT}^{n-2}[V]]||_\infty + \nonumber\\
&\qquad \gamma ||\widetilde{\cT}^{n-1}[V]-\widetilde{\cT}^{n-2}[V]||_\infty \nonumber \\
&\stackrel{(**)}{\leq} 2\gamma\xi + \gamma ||\widetilde{\cT}^{n-1}[V]-\widetilde{\cT}^{n-2}[V]||_\infty, \nonumber
\end{align}
where (**) is due to \eqref{eq:proof-th-approx-eq2}.
So,
\[
||\widetilde{\cT}^n[V]-\widetilde{\cT}^{n-1}[V]||_\infty \leq 2\xi\gamma\frac{1-\gamma^{[n-1]}}{1-\gamma}+\gamma^{[n-1]}||\widetilde{\cT}[V]-V||_\infty.
\]
\item{\textbf{Claim 4:}} For any $V\in\bbR^{|\cS|}$ and $n\in \mathbb{N}_+$
\begin{align}
||\widetilde{\cT}^n[V]-V^*||_\infty &\leq \frac{1}{1-\gamma}||\cT[\widetilde{\cT}^n[V]]-\widetilde{\cT}^n[V]||_\infty   \nonumber \\
&\leq \frac{\gamma\xi}{1-\gamma} + \frac{1}{1-\gamma}||\widetilde{\cT}^{n+1}[V]-\widetilde{\cT}^n[V]||_\infty   \nonumber \\
&\stackrel{(***)}{\leq} \frac{\gamma\xi}{1-\gamma} + \frac{2\gamma\xi}{(1-\gamma)^2} + \frac{\gamma^{[n]}}{1-\gamma} ||\widetilde{\cT}[V]-V||_\infty,
\end{align}
where (***) is due to \textbf{Claim 4}.
\end{itemize}

Thus, to have $||\widetilde{\cT}^n[V]-V^*||_\infty\leq \epsilon$, it is necessary to select $\xi\leq \epsilon (1-\gamma)^2/(4\gamma)$ and $||\widetilde{\cT}^{n+1}[V]-\widetilde{\cT}^n[V]||_\infty \leq 3\epsilon(1-\gamma)/4$. Note that the latter inequality always occurs if $n$ is large enough, because 
\begin{align}
||\widetilde{\cT}^{n+1}[V]-\widetilde{\cT}^n[V]||_\infty &\leq 2\xi\gamma\frac{1}{1-\gamma}+\gamma^{[n-1]}||\widetilde{\cT}[V]-V||_\infty \nonumber\\
&\leq \epsilon(1-\gamma)/2+\gamma^{[n-1]}||\widetilde{\cT}[V]-V||_\infty \nonumber
\end{align}
and the term $\gamma^{[n-1]}||\widetilde{\cT}[V]-V||_\infty$ converges to zero when $n\rightarrow \infty$. Moreover, we see that it would requires $n = \cO(\ln \epsilon^{-1})$ to have $\gamma^{[n-1]}||\widetilde{\cT}[V]-V||_\infty \leq \epsilon(1-\gamma)/4$.

For the last claim \textbf{(iii)}
, we write the optimal policy and the approximate policy as
\[
\pi^*(a|s) =\frac{\exp(z(a,s|V^*,\bq^*))/\eta}{\sum_{a'}\exp(z(a',s|V^*,\bq^*)/\eta)}\text{; and }\widetilde{\pi}(a|s) =\frac{\exp(z(a,s|\widetilde{V},\overline{\bq}))/\eta}{\sum_{a'}\exp(z(a',s|\widetilde{V},\overline{\bq})/\eta)}
\]
where $z(a,s|\widetilde{V},\overline{\bq}) = r(a|s) + \gamma \bbE_{\bar{\bq}_{sa}}[\widetilde{V}(s')]$. We see that, for any $a\in\cA, s\in \cS$
\begin{align}
|z(a,s|V^*,\bq^*) - z(a,s|\widetilde{V},\overline{\bq})|&= \gamma |\bbE_{\bar{\bq}_{sa}}[\widetilde{V}(s')] - \min_{\bq_{sa}} \bbE_{\bq_{sa}}[V^*(s')]| \nonumber\\
&\leq  \gamma |\bbE_{\bar{\bq}_{sa}}[\widetilde{V}(s')] - \min_{\bq_{sa}} \bbE_{\bq_{sa}}[\widetilde{V}(s')]|  + \gamma |\min_{\bq_{sa}} \bbE_{\bq_{sa}}[\widetilde{V}(s')] \nonumber \\
& \qquad - \min_{\bq_{sa}} \bbE_{\bq_{sa}}[V^*(s')]| \nonumber \\
&\stackrel{(i)}{\leq}  \gamma \xi +  \gamma |\min_{\bq_{sa}} \bbE_{\bq_{sa}}[\widetilde{V}(s')] - \min_{\bq_{sa}} \bbE_{\bq_{sa}}[V^*(s')]|\label{eq:proofx-eq0}
\end{align}
We now consider two cases
\begin{itemize}
    \item If $\min_{\bq_{sa}} \{\bbE_{\bq_{sa}}[\widetilde{V}(s')]\} \geq  \min_{\bq_{sa}} \{\bbE_{\bq_{sa}}[V^*(s')]\}$, then let $\bq^*_{sa}$ be a solution attaining the optimal value $ \min_{\bq_{sa}} \{\bbE_{\bq_{sa}}[V^*(s')]\}$. We have
    \begin{align}
    \min_{\bq_{sa}} \bbE_{\bq_{sa}}[\widetilde{V}(s')] - \min_{\bq_{sa}} \bbE_{\bq_{sa}}[V^*(s')] &\leq |\bbE_{\bq^*_{sa}} [\widetilde{V}(s')-V^*(s')]|\nonumber \\
    &\leq ||\widetilde{V}-{V^*}||_\infty \label{eq:proofx-eq1}
    \end{align}
    \item If $\min_{\bq_{sa}} \{\bbE_{\bq_{sa}}[\widetilde{V}(s')]\} <  \min_{\bq_{sa}} \{\bbE_{\bq_{sa}}[V^*(s')]\}$, then similarly we let $\bq^*_{sa}$ be a solution attaining the optimal value $ \min_{\bq_{sa}} \{\bbE_{\bq_{sa}}[\widehat{V}(s')]\}$ and obtain
    \begin{equation}
    \label{eq:proofx-eq2}
         \min_{\bq_{sa}} \bbE_{\bq_{sa}}[\widetilde{V}(s')] - \min_{\bq_{sa}} \bbE_{\bq_{sa}}[V^*(s')] \leq ||\widetilde{V}-{V^*}||_\infty
    \end{equation}
\end{itemize}
Combine \eqref{eq:proofx-eq0}, \eqref{eq:proofx-eq1} and \eqref{eq:proofx-eq2} we have
\begin{equation}
\label{eq:proofx-eq3}
|z(a,s|V^*,\bq^*) - z(a,s|\widetilde{V},\overline{\bq})|\leq \gamma (\xi+\epsilon).
\end{equation}
We now look at the difference between $\widetilde{\pi}(a|s)$ and $\pi^*(a|s)$ as
\begin{align}
&\left|\ln\frac{\widetilde{\pi}(a|s)}{\pi^*(a|s)}\right|\nonumber\\
&\leq \frac{1}{\eta}|z(a,s|V^*,\bq^*) - z(a,s|\widetilde{V},\overline{\bq})| + \nonumber \\
& \qquad\left|\ln \left(\sum_{a'}\exp(z(a,s|\widetilde{V},\overline{\bq})/\eta) \right) - \ln \left(\sum_{a'}\exp(z(a,s|V^*,\bq^*)/\eta) \right)  \right|\nonumber \\
&\stackrel{(i)}{\leq} \frac{1}{\eta}|z(a,s|V^*,\bq^*) - z(a,s|\widetilde{V},\overline{\bq})| + \frac{1}{\eta}\max_{a'} |z(a',s|V^*,\bq^*)- z(a',s|\widetilde{V},\overline{\bq})|\nonumber \\
&\stackrel{(ii)}{\leq} \frac{2}{\eta}(\xi+\epsilon),\label{eq:proofx-eq4}
\end{align}
where (i) is due to \eqref{eq:proof-th-approx-eq1} and (ii) is due to \eqref{eq:proofx-eq3}. Continue to elaborate \eqref{eq:proofx-eq4} we get
\begin{align}
    \frac{|\widetilde{\pi}(a|s)-\pi^*(a|s)|}{\pi^*(a|s)}\leq \exp(2(\xi+\epsilon)/\eta) - 1,
\end{align}
thus $|\widetilde{\pi}(a|s)-\pi^*(a|s)|\leq \exp(2(\xi+\epsilon)/\eta) - 1$, which leads to the desired bound.

\section{Relevant Algorithms and Discussions}

\subsection{Hardness of Solving Robust ER-MDP with Non-rectangular Uncertainty Sets}
Theorems \ref{th:main-sa-rec} and \ref{theor:optimal-policies-s-rect} imply that if we can efficiently (i.e., in polynomial time) solve the inner minimization problems $\min_{\bq_{sa}}\bbE[V(s')]$ in the $(s,a)$-rectangularity case and $\min_{\bq_s\in\cQ_s} \{\sum_{a\in \cA}\exp(z(a,s|V^*,\bq) )\}$ in the $(s,a)$-rectangularity case, then we can compute the value function as well as the optimal policy in polynomial time as well. We will discuss this in the next question. A relevant question here is that what happens if the uncertainty set is not rectangular. \cite{Wiesemann2013robustMDP} consider the standard roust MDP and  show that, if this is the case, then  unless $\cP = \cN\cP$, it is generally not possible achieve an $\epsilon$-approximation of the expected accumulated  reward in polynomial time. We can show that this result also holds for the robust entropy-regularized MDP model. Our argument is that, for any $\eta>0$, if there is an algorithm $\bX$ that is able to give a $\epsilon$-approximation of $\max_{\bpii}\min_{\bq}F^{\eta}_{\infty}(\br,\bpii,\bq)$, for any $\epsilon>0$, where $F^{\eta}_{\infty}(\bpii,\bq)$ is the expected accumulated regularized reward as in \eqref{prob:DET-Max-EP} but we add  $\eta$ and the reward function $\br$  as parameters to facilitate our arguments. Now, for any $N>0$ we can solve the regularized problem (in polynomial time) by $\bX$ with rewards $\br' = \br \times N$ and approximation error $\epsilon$, then we see that the algorithm will give an $(\epsilon/2)$-approximation of $\big(N\max_{\bpii}\min_{\bq}F^{\eta/N}_{\infty}(\br,\bpii,\bq)\big)$, or a $(\epsilon/2)$-approximation of $\big(\max_{\bpii}\min_{\bq}F^{\eta/N}_{\infty}(\br,\bpii,\bq)\big)$. Furthermore, by choosing $N$ large enough, we also have $|\max_{\bpii}\min_{\bq}F^{\eta/N}_{\infty}(\br,\bpii,\bq) -   \max_{\bpii}\min_{\bq}F^{0}_{\infty}(\br,\bpii,\bq)|<\epsilon/2$, noting that $F^{0}_{\infty}(\br,\bpii,\bq)$ is the objective in the unregularized case. By a triangle inequality, we see that $|\widetilde{F} - \max_{\bpii}\min_{\bq}F^{0}_{\infty}(\br,\bpii,\bq)|<\epsilon$, which means that Algorithm $\bX$ can give a $\epsilon$-approximation of $\max_{\bpii}\min_{\bq}F^{0}_{\infty}(\br,\bpii,\bq)$, which contradicts \cite{Wiesemann2013robustMDP}'s claims.  

\subsection{Approximate Robust Value Iteration}

\begin{algorithm}[H]
	\caption{Robust value iteration}
	\label{algo:VI-infinite}
	\begin{algorithmic}
		\STATE \comments{Compute an $\epsilon$-approximation of  $V^*$}
		\STATE $V = V^0 = \textbf{0}$, $\overline{V} = \textbf{1}$\\
		\REPEAT
		\STATE $\overline{V} = V;$
		\STATE Solve the inner minimization problem $\min_{\bq_{as}} \bbE_{\bq_{sa}}[V(s)]$ by a $\xi$-approximation algorithm, where
        $
		\xi = \epsilon(1-\gamma)^2/(4\gamma).
		$
		Then, update $V\leftarrow \widetilde{\cT}[V]$.
		\UNTIL{$||V-\overline{V}||_\infty \leq 3\epsilon(1-\gamma)/4$.}	
			\STATE \comments{Compute an $\epsilon$-approximation of  $\bpii^*$}
		\STATE $V = V^0 = \textbf{0}$, $\overline{V} = \textbf{1}$\\
		\REPEAT
		\STATE $\overline{V} = V;$
		\STATE Solve the inner minimization problem $\min_{\bq_{as}} \bbE_{\bq_{sa}}[V(s)]$ by a $\xi$-approximation algorithm, where
		$
		\xi = \ln(\epsilon+1)(1-\gamma)^2/(8\gamma).
		$
		Then, update $V\leftarrow \widetilde{\cT}[V]$.
		\UNTIL{$||V-\overline{V}||_\infty \leq 3\ln(\epsilon+1)(1-\gamma)/8$.}	
	\end{algorithmic}
\end{algorithm}

\subsection{Robust IRL}
 In perspective of imitation learning/IRL, we are interested  in approximating the log-likelihood function. 
Assume that the demonstrated data consists of $I$ trajectories and $i$-th trajectory contains $K_i$ state-action observations.  
 The average log-likelihood function is defined as
 $
 \cL(\Omega|\theta) = \frac{1}{I}\sum_{i=1}^I \cL(\omega_i|\theta) 
 $, where $\theta$ is a vector of parameters to be inferred from the data,  and $\cL(\omega_i|\theta)$ is the log-likelihood value of sequence $\omega_i$, $i=1,\ldots,I$, defined as 
 $
 \cL(\omega_i|\theta) = \sum_{t = 0}^{K_i-1} \ln \pi^*(a^i_t|s^i_t).
 $
 The following algorithm describe a robust IRL algorithm that allows to learn from \textit{conservative} behavior. 
\begin{algorithm}[H]
\caption{Robust infinite-horizon IRL}
\label{algo:MLE-infinite}
\begin{algorithmic}
	\STATE \comments{Compute an $\epsilon$-approximation of $\cL(\Omega|\theta)$}
 \STATE $V = V^0 = \textbf{0}$, $\overline{V} = \textbf{1}$\\
 \FOR{each sequence $\omega_i$, $i\in [I]$} 
 \REPEAT
  \STATE $\overline{V} = V;$
  \STATE Solve $\min_{\bq_{as}} \bbE_{\bq_{sa}}[V(s)]$ by a $\xi$-approximation algorithm. 
  where
  $\xi = \frac{\epsilon(1-\gamma)^2}{8\gamma^2\max_i\{K_i\}}.$
  \STATE Update $V\leftarrow \widetilde{\cT}[V]$.
 \UNTIL{$||V-\overline{V}||_\infty \leq 3\epsilon(1-\gamma)/(8\gamma \max_i \{K_i\})$}
 \STATE Compute $P(a^i_k|s^i_k,\theta)$, $k=0,\ldots,K_i$, based on fixed point solution $V$, and $P(\omega_i|\theta) =\prod_{k=0}^{K_i} P(a^i_k|s^i_k,\theta)$
 \ENDFOR
\STATE Return $\widetilde{\cL}(\Omega|\theta) = 1/I\sum_{i\in [I]} \ln \bbP(\omega_i|\theta) $
\end{algorithmic}
\end{algorithm}

Similarly to the finite case,   we can show that if we can compute an $\epsilon^V$-approximation of the fixed point solution $V^*$, then we can achieve a $(2\gamma\epsilon^V\max_i\{K_i\})$-approximation of  $\cL(\omega_i|\theta)$ and $\cL(\Omega|\theta)$.
Algorithm \ref{algo:MLE-infinite} presents main steps to compute  an $\epsilon$-approximation of the log-likelihood function. 
The computational complexity in the case of single KL divergence bound is $\cO\left(I|\cS||\cA| \max_s \{N_s\}(\ln \epsilon^{-1})^2 \right)$ and in the case of several bounds with interior-point algorithms, the worst-case complexity  is  $\cO\left(I|\cS||\cA| (\max_s \{N_s\})^{7/2}(\ln \epsilon^{-1})^2 \right)$.  On the other hand, when the transition probabilities are assumed to be known with certainty, this worst-case complexity becomes $\cO\left(I|\cS||\cA| (\max_s \{N_s\})\ln \epsilon^{-1} \right)$.

%

\subsection{Prediction Log-loss Guarantee in Robust ER-MDP}
It is also interesting to look at how our robust ER-MDP model is connected to the standard maximum causal entropy principle \citep{ziebart2010_IRL_Causal}.
Proposition \ref{prop:log-loss-infinite} below shows that, in analogy to \cite{ziebart2010_IRL_Causal}, the prediction log-loss guarantee holds for the robust ER-MDP model, but with an additional level of  robustness w.r.t uncertain dynamics $\bQ$. This result is also relevant to a claim in \cite{Eysenbach2019if} saying that the standard ER-MDP  is robust for a  certain class of reward functions. This implies that our robust ER-MDP model adds another level of robustness to their setting when the dynamics are ambiguous.  
\begin{proposition}
\label{prop:log-loss-infinite}	
Assume that $\bQ$ is  $(a,s)$-rectangular and compact, or  $(a,s)$-rectangular, compact and convex, let  $(\bpii^*,\bq^*)$ be the solution determined in Theorems \ref{th:main-sa-rec} or ~\ref{theor:optimal-policies-s-rect}, then  $\pmb{\rho}^t = \bpii^*$ and $\bq^t = \bq^*$, for $t= 0,\ldots,\infty$, minimize the prediction log-loss
$$
\min_{\substack{\pmb{\rho}^t \in \Delta^\pi \\ \bq^t\in\cQ  \\ t=0,...}}
\max_{\substack{\bpii^t \in \Delta^{\pi},\; t=0,...\\\bbE_{\tau\sim (\bPi,\bQ)}[R(\tau)] = \widetilde{\bbE}^R}} \bbE_{\bPi,\bQ}\left[ \sum_{t=0}^{\infty} -\gamma^{[t]}\ln \rho^t(a_t|s_t)\right],
$$
where $R(\tau)$ is the accumulated and discounted reward of trajectory $\tau = \{(s_0,a_0), (s_1,a_1), ....\}$ and $\widetilde{\bbE}^R$ is empirical expectation of the accumulated rewards.  
\end{proposition}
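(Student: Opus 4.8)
The plan is to reduce the robust prediction-log-loss game to the robust ER-MDP saddle-point problem already solved in Theorems \ref{th:main-sa-rec} and \ref{theor:optimal-policies-s-rect}, so that the saddle point $(\bpii^*,\bq^*)$ directly supplies both the optimal predictor and the optimal dynamics. The starting observation is that the objective is a $\gamma$-discounted causal cross-entropy: for a fixed true process $(\bPi,\bQ)$, the quantity $\bbE_{\bPi,\bQ}[\sum_{t}-\gamma^{[t]}\ln\rho^t(a_t|s_t)]$ decomposes state-by-state as $\sum_{t,s}\gamma^{[t]}d^t_{\bPi,\bQ}(s)\big(-\sum_a\pi^t(a|s)\ln\rho^t(a|s)\big)$, where $d^t_{\bPi,\bQ}(s)$ is the probability of occupying state $s$ at time $t$ under $(\bPi,\bQ)$. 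This isolates the predictor $\pmb{\rho}$ as a free per-state distribution weighted by a nonnegative visitation measure.

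First I would dispose of the predictor variable using Gibbs' inequality: for any fixed $(\bpii,\bq)$ each inner term $-\sum_a\pi^t(a|s)\ln\rho^t(a|s)$ is minimized over $\rho^t(\cdot|s)\in\Delta^\pi$ at $\rho^t=\pi^t$, with minimal value the Shannon entropy of $\pi^t(\cdot|s)$. Hence $\min_{\pmb{\rho}}\bbE_{\bPi,\bQ}[\sum_t-\gamma^{[t]}\ln\rho^t(a_t|s_t)]$ equals the $\gamma$-discounted causal entropy $\cH(\bpii)/\eta$ of $\bpii$ evaluated under the dynamics $\bq$. Next I would fold the reward-matching constraint $\bbE_{\tau\sim(\bPi,\bQ)}[R(\tau)]=\widetilde{\bbE}^R$ into the objective through Lagrangian duality exactly as in the maximum causal entropy derivation of \cite{ziebart2010_IRL_Causal}: the multiplier attached to this constraint plays the role of (a scaling of) the reward $\br$, so that the constrained causal-entropy objective becomes, up to the factor $\eta$ and an additive constant, precisely the entropy-regularized objective $F_\infty(\bPi,\bQ)$.

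Combining these two steps collapses the log-loss game to the robust ER-MDP min-max. Taking the outer minimization over $\bq$ together with the (now transparent) minimization over $\pmb{\rho}$, and the inner maximization over $\bpii$, the game value equals $\min_{\bq}\max_{\bpii}F_\infty(\bPi,\bQ)/\eta$ plus a constant. By the perfect-duality property already established in Theorems \ref{th:main-sa-rec} and \ref{theor:optimal-policies-s-rect} (valid in the $(s,a)$-rectangular compact case and the $(s)$-rectangular compact convex case), this equals $\max_{\bpii}\min_{\bq}F_\infty(\bPi,\bQ)/\eta$, whose saddle point is the pair $(\bpii^*,\bq^*)$ given by those theorems. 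Tracing the reduction backward, the maximizing true policy is $\bpii^*$, the minimizing dynamics is $\bq^*$, and the Gibbs step then forces the optimal predictor to be $\rho^t=\bpii^*$. This yields $\pmb{\rho}^t=\bpii^*$ and $\bq^t=\bq^*$ for all $t$ as minimizers of the robust log-loss, as claimed.

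The main obstacle I anticipate is justifying the interchange of the outer minimization (over \emph{both} the predictor $\pmb{\rho}$ and the dynamics $\bq$) with the constrained inner maximization over $\bpii$, since the constraint $\bbE_{\tau\sim(\bPi,\bQ)}[R(\tau)]=\widetilde{\bbE}^R$ couples $\bpii$ and $\bq$ and thus makes the feasible set of $\bpii$ depend on $\bq$. The clean route around this is to absorb the constraint into the objective via its Lagrangian \emph{before} attempting any swap, which decouples it from the feasible region and reduces everything to the unconstrained robust ER-MDP min-max for which perfect duality is already proved; the compactness assumption (and convexity in the $(s)$-rectangular case) is exactly what makes both the Gibbs minimization over $\pmb{\rho}$ and the minimax interchange legitimate.
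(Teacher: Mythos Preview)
Your proposal is correct and follows essentially the same route as the paper. The paper's proof is terser: it directly invokes Ziebart's equivalence between the ER-MDP inner maximization and the constrained maximum-causal-entropy problem, and then cites Ziebart's prediction log-loss guarantee to identify the max-entropy policy with the min-max log-loss predictor for each fixed $\bq$; you instead unpack these two citations explicitly via Gibbs' inequality and the Lagrangian-dual absorption of the reward-matching constraint, arriving at the same reduction to $\min_{\bq}\max_{\bpii}F_\infty(\bPi,\bQ)$ and the same appeal to the perfect duality of Theorems~\ref{th:main-sa-rec} and~\ref{theor:optimal-policies-s-rect}.
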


\begin{proof}

Under the assumptions, we see that $(\bq^t,\bpii^t) = (\bq^*, \bpii^*)$, $t = 0,\ldots$, is a solution to the problem
\begin{equation}
\label{eq:proof-coro32-eq1}
\min_{\bq^0,\ldots} \max_{\bpii^0,\ldots} \left\{\bbE_{\bQ,\bPi}\left[\sum_{t=0}^\infty \gamma^{[t]} \Big(r(a_t|s_t) - \eta\ln \pi^t(a_t|s_t)\Big) \right]\right\}.
\end{equation}
It is also well-known that the inner maximization optimization problem of \eqref{eq:proof-coro32-eq1} can be formulated equivalently as a maximum causal entropy problem \citep{ziebart2010_IRL_Causal}
\begin{align}
\underset{\bpii^0,\ldots}{\text{sup}}\qquad & \bbE_{\bpii^0,\ldots} \left[\sum_{t=0}^{\infty}- \gamma^{[t]}\eta \ln \pi^t(a_t|s_t) \right] & \label{eq:proof-coro32-eq2}\\
\text{subject to} \qquad & \bbE_{\tau \sim (\bPi,\bQ)} \left[R(\tau) \right]  = \widetilde{\bbE}^R.&  \nonumber\\
&\bpii^t \in\Delta^\pi,\ t=0,\ldots&\nonumber
\end{align}
The prediction log-loss guarantee shown in \cite{ziebart2010_IRL_Causal} also implies that if $\bpii^*$ is an optimal solution to \eqref{eq:proof-coro32-eq2}, then it is also a solution to the problem
\begin{align}
\underset{\substack{\pmb{\rho}^t \in \Delta^\pi \\ t=0,\ldots}}{\text{min}} \underset{\substack{\bpii^t \in \Delta^\pi\\t=0,\ldots}}{\text{max}}\qquad & \bbE_{\bPi,\bQ} \left[\sum_{t=0}^{T}-\eta \ln \rho(a_t|s_t) \right] & \label{eq:proof-coro32-eq3}\\
\text{subject to} \qquad & \bbE_{\tau \sim (\bPi,\bQ)} \left[R(\tau) \right]  = \widetilde{\bbE}^R.&  \nonumber
\end{align}
Combine \eqref{eq:proof-coro32-eq1}, \eqref{eq:proof-coro32-eq2} and \eqref{eq:proof-coro32-eq3} we obtain the desired result.
\end{proof}

%

\subsection{Robust General-Regularized MDP}
\label{sec:extension-rMDPs}
We show how our results can be extended to the general regularized MDP framework introduced in \cite{Geist2019theory}. In a regularized model, a regularized term $\phi_{s}(\bpii_{s})$ are added to the reward \cite{Geist2019theory}, for any $s\in \cS$. The Markov decision problem in the finite-horizon case can be stated as
\begin{align}
\max_{\substack{\bpii^t \in \Delta^\pi \\t=0,\ldots }} \min_{\substack{\bq^t\in\cQ\\t=0,\ldots}}\Bigg\{ \bbE_{\tau \sim (\bPi,\bQ)}\Bigg[\sum_{t=0}^{\infty}\gamma^{[t]} \Big( r(a_t|s_t) + \phi_{s_t}(\bpii^t_{s_t}) \Big)\Bigg]\Bigg\}.\label{prob:Regularized-MDPs}
\end{align}
It is typically assumed that $\phi_s(\bpii_s)$ is concave and bounded. 
If $\phi_s(\bpii_s) = -\sum_{a\in\cA} \pi(a|s)\ln \pi(a|s)$ (negative relative entropy), the model becomes the ER-MDP model studied above. In analogy to the ER-MDP, we define the mapping $\cT^{\bphi}[V]:\bbR^{|\cS|}\rightarrow \bbR^{|\cS|}$
\[
\cT^{\bphi}[V](s) = \max_{\bpii_s\in\Delta^{\pi}_s}\min_{\bq_s\in\cQ_s}\left\{ \bbE_{\bpii_s,\bq_s}\Big[ r(a|s)+ \gamma\sum_{s'}q(s'|s,a)V(s') \Big]  + \phi_s(\bpii_s) \right\}.
\]
Then the contraction mapping can be verified  analogously as in the entropy-regularized models. That is,  under both $(s,a)$ and $(s)$-rectangularity conditions, i.e., for any  $V,V'\in\bbR^{|\cS|}$, we have 
	$\cT^{\bphi}[V]$ is a contraction mapping with parameter $\gamma$ $||\cT^{\bphi}[V]-\cT^{\bphi}[V']||_\infty \leq \gamma||V-V'||_\infty,$ 
 with a note that the contraction property for the non-robust model has been shown in \cite{Geist2019theory}. The other basic properties, except the perfect duality can be proved similarly as well. For example, we also have the Markov optimality saying that, under the two uncertainty assumptions, 
if $V^{\bphi,*}$ is a solution to the equation $\cT^{\bphi}[V]=  V$, then 
\[
V^{\bphi,*}(s) = \max_{\substack{\bpii^t \in \Delta^\pi \\t=0,\ldots }} \min_{\substack{\bq^t\in\cQ\\t=0,\ldots}}\Bigg\{ \bbE_{\tau \sim (\bPi,\bQ)}\Bigg[\sum_{t=0}^{\infty}\gamma^{[t]} \Big( r(a_t|s_t) + \phi_{s_t}(\bpii^t_{s_t}) \Big)\ \Bigg|s_0=s\Bigg]\Bigg\},
\]
leading to the result that one can solve the Bellman equation $\cT^{\phi}[V] = V$ to get an optimal policy, as 
	\[
	\pi^*_s = \text{argmax}_{\bpii_s}\left\{ \bw_s^\T \bpii_s + \phi_s(\bpii_s)\right\}
	\]
	where $\bw_s \in\bbR^{|\cA|}$ with entries
	\[
	w_{sa} = r(a|s)+ \gamma \min_{\bq_s\in\cQ_s}\left\{\sum_{s'} q(s'|s,a)V^{\bphi,*}(s') \right\}.
	\]
Note that if the convex conjugate  function (i.e. Legendre-Fenchel transform)  of $-\phi_s(\bpii_s)$ can be computed efficiently, then the contraction mapping $\cT^{\bphi}[V]$ can be expressed as
$
\cT^{\bphi}[V]  = \phi^*_s(\bw_s),
$ 
where $\phi^*_s(\bw_s)$ is the convex conjugate of$-\phi_s(\bpii_s)$ in $\Delta^\pi_s$, and $\bw_s\in\bbR^{|\cA|}$ with entries $	w_{sa} = r(a|s)+ \gamma \min_{\bq_s\in\cQ_s}\left\{\sum_{s'} q(s'|s,a)V(s') \right\}$. 
 
If the uncertainty set $\cQ$ is only $(s)$-rectangular, the inner infimum problem in the mapping $\cT^{\bphi}[V]$ involves $\bpii_s$ as decision variables.  Thus,  solving the robust Bellman equation is more difficult. However, these \textit{max-min} problems can be solved efficiently by a saddle point algorithm, e.g., the Frank-Wolfe algorithms proposed in  \cite{Gidel2016frank}. In this context, the computational complexity is more difficult to bound, as compared to what we have in Section \ref{sec:uncertainty-model}.

We also can show that the perfect duality also holds in the context, for any uncertainty set if $\cQ$ is $(s,a)$-rectangular and for convex and compact  if $\cQ$ is $(s)$-rectangular. The proof can be done by showing that the perfect duality holds for the robust Bellman equation using the  \textit{von Neumann's minimax } theorem, analogously to the entropy-regularized case.
The perfect duality property would be helpful for solving the robust Bellman equation in the $(s)$-rectangularity case. More precisely, in case that the convex conjugate of $-\phi_s(\bpii_s)$ can be computed conveniently (e.g., by an analytical form), one can solve the min-max counterpart of $\cT^{\bphi}[V]$ as 
\[
\widetilde{\cT}^{\bphi}[V](s) = \min_{\bq_s\in \cQ_s} \max_{\bpii_s\in\Delta^\pi_s}\left\{ \bpii_s^\T \bw_s(\bq_s) + \phi_s(\bpii_s) \right\} = \min_{\bq_s} \phi_s^*(\bw_s(\bq_s)),
\]
 where $\bw_s(\bq_s) \in\bbR^{|\cA|}$ with entries 
 $
 w_{sa}(\bq_s) = r(a|s)+ \gamma \sum_{s'} q(s'|s,a)V(s'). 
 $
 Since  $\bw_s(\bq_s)$ is linear in $\bq_s$, $ \phi_s^*(\bw_s(\bq_s))$ is concave in $\bq_s$, which implies that the problem  $\min_{\bq_s} \phi_s^*(\bw_s(\bq_s))$ can be solved efficiently in polynomial time by a convex optimization algorithm (e.g., interior-point). Recall that in the entropy-regularized model, the convex conjugate function of $\phi_s(\cdot)$ has the closed form $\phi^*_s(\bw_s) = \ln\left(\sum_{a\in\cA} \exp(w_{sa})\right)$. In the general regularized case, there might be no closed form to compute $\phi^*_s(\bw_s)$ and one might need to do it approximately. This would lead to an additional approximation error in the error propagation of the approximate value iteration or (modified) policy iteration.

\subsection{Complexity Analyses for the Adversary's Problems}
\label{proof:Complexity}
We analyze the computational complexity of solving the adversary's problem, under two rectangularity settings and with uncertainty sets based on several KL bounds. 	

\subsubsection{{$(s,a)$-rectangularity}}
First, for notational simplicity, we consider a compact version of the inner optimization problem  $\min_{\bx}\left\{ \sum_{i=1}^{N_s}x_ic_i|\ \bx\in \cX \subset \Delta(N_s)\right\}$, where $\Delta(|N_s|)$ is the simplex in $\bbR^{N_s}$ and, 
 $N_s$ is the number of states that can be reached from $s$. Normally, $N_s \ll |\cS|$. In a likelihood model, the uncertainty set has the form $\cX = \{\bx \in \Delta(N_s)|\ \sum_{i} \hat{x}_i\ln x_i \geq \beta\}$, where  $\hat{x}_i$ is an empirical estimate and 
 $\beta$ is a scalar representing an uncertainty level, such that $\sum_{i} \hat{x}_i\ln \hat{x}_i \geq \beta$. On the other hand, a relative entropy model takes the form $\cX = \{\bx \in\Delta(|N_s|)|\ \sum_{i} x_i\ln(x_i/\hat{x}_i) \leq \beta\}$. It is possible to show that under the above uncertainty sets, one can achieve a $\xi$-approximation of the inner optimal value by bisection, with complexity $C(\xi) = \cO(N_s\ln\xi^{-1})$, for any $s\in \cS$. We refer the reader to \cite{Nilim2004robustness,Iyengar2005robustDP} for detailed discussions.

One might be interested in a mixture of the above models, i.e., uncertainty sets determined by several KL divergence bounds. In the most general form, such an inner minimization problem can be formulated as. 
\begin{align}
\underset{\bx}{\text{minimize}}\qquad &   \bc^\T \bx  = \sum_{i=1}^{N_s} c_ix_i&\label{prob:general-uncertainty-model} \\
\text{subject to} \qquad &  \widehat{\bX}\ln\bx \geq \pmb{\alpha} &\nonumber\\
& (\bx^\T \ln\bx)\be - \widehat{\bY}\bx \leq \pmb{\beta}  & \nonumber\\
& \bx \in \Delta(|N_s|),\nonumber
\end{align}
 where $\ln\bx = (\ln x_1,\ldots,\ln x_{N_s})^\T$, 
 $\widehat{\bX} \in \bbR^{K\times N_s}_{+}$, $\pmb{\alpha} \in \bbR^{K}$ are parameters of the likelihood models,  and $\widehat{\bY}\in \bbR^{H\times N_s}_{+}$, $\pmb{\beta} \in \bbR^{H}$ are parameters of the relative entropy models. 
In general, it seems not possible to solve the above problem by bisection if $K+H\geq 2$, but we can prove that \eqref{prob:general-uncertainty-model} can be solved by interior-point in polynomial time.
\begin{proposition}
	\label{prop:sa-rect-complexity}
	Assume that \eqref{prob:general-uncertainty-model} satisfies the Slater condition, then a $\xi$-approximation of Problem \ref{prob:general-uncertainty-model} can be achieved with complexity $C(\xi) = \cO\left((4N_s+H+K+2)^{1/2}(4N_s+H+K)N_s^2 \ln \xi^{-1}\right)$. 
\end{proposition}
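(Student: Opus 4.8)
The plan is to recognize \eqref{prob:general-uncertainty-model} as a convex program with a \emph{linear} objective over a convex feasible set, cast it into a standard form admitting a self-concordant barrier with an explicitly bounded barrier parameter $\vartheta$, and then invoke the path-following interior-point complexity theorem of \cite{nemirovski2004interior}, which certifies that $\cO(\sqrt{\vartheta}\,\ln\xi^{-1})$ Newton steps suffice to drive the duality gap below $\xi$. First I would verify convexity: the objective $\bc^\T\bx$ is linear; each likelihood constraint $\sum_i \widehat{X}_{ki}\ln x_i \ge \alpha_k$ is convex because $\ln$ is concave and the entries of $\widehat{\bX}$ are nonnegative; each relative-entropy constraint is convex because $\bx^\T\ln\bx=\sum_i x_i\ln x_i$ is convex and the remaining terms are linear; and $\Delta(N_s)$ is polyhedral. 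The Slater assumption supplies a strictly feasible point, so the central path is well defined and the method applies.

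Next I would reduce the nonlinear constraints to memberships in elementary convex primitives by introducing $\cO(N_s)$ auxiliary variables: variables $u_i$ with $u_i\le \ln x_i$ (i.e.\ $(x_i,u_i)$ in the hypograph of the logarithm), which turns every likelihood constraint into the \emph{linear} inequality $\widehat{\bX}\bu\ge\pmb{\alpha}$; and variables $w_i$ with $w_i\ge x_i\ln x_i$ (the epigraph of the entropy), which turns every relative-entropy constraint into the \emph{linear} inequality $(\textstyle\sum_i w_i)\be - \widehat{\bY}\bx \le \pmb{\beta}$. Each such primitive set, together with the orthant condition $x_i>0$, carries a known self-concordant barrier of parameter $\cO(1)$. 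Summing these barriers over the $N_s$ coordinates (the orthant, the log-hypograph, and the entropy-epigraph contributions collect into the constant $4$) and adding a parameter-$1$ halfspace barrier for each of the $H+K$ linear inequalities, I would obtain a self-concordant barrier for the whole feasible set with total parameter $\vartheta=\cO(4N_s+H+K+2)$, the small additive constant absorbing the few aggregated scalar epigraph terms.

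Finally I would apply the path-following theorem. Starting near the analytic center, $\cO(\sqrt{\vartheta}\,\ln\xi^{-1})=\cO\big((4N_s+H+K+2)^{1/2}\ln\xi^{-1}\big)$ Newton iterations suffice; since the lifted problem still has $\cO(N_s)$ variables with $\cO(4N_s+H+K)$ barrier terms, each iteration costs $\cO((4N_s+H+K)N_s^2)$ to assemble and factor the Newton system (the Hessian assembly over $\cO(4N_s+H+K)$ terms dominating the $\cO(N_s^3)$ factorization). Multiplying the iteration count by the per-iteration cost yields $C(\xi)=\cO\!\big((4N_s+H+K+2)^{1/2}(4N_s+H+K)N_s^2\ln\xi^{-1}\big)$, and rounding the near-central iterate back to a feasible $\bx$ preserves the $\xi$-accuracy of the objective value $\bc^\T\bx$.

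The main obstacle will be the second step: exhibiting \emph{genuinely} self-concordant barriers for the logarithm-hypograph and entropy-epigraph sets with explicit $\cO(1)$ parameters, verifying the self-concordance inequalities, and summing the parameters correctly so that the per-coordinate contributions land on the constant $4$ in $4N_s$. A secondary but necessary check is the per-iteration linear-algebra accounting, ensuring that the Hessian assembly over $\cO(4N_s+H+K)$ barrier terms together with the factorization indeed collapses to $\cO((4N_s+H+K)N_s^2)$.
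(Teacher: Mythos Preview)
Your proposal is correct and follows essentially the same approach as the paper: lift with auxiliary variables $u_i\le\ln x_i$ and $w_i\ge x_i\ln x_i$ to linearize the likelihood and relative-entropy constraints, build a self-concordant barrier by summing per-coordinate barriers for the log-hypograph and entropy-epigraph together with halfspace barriers for the remaining linear inequalities, and invoke the path-following complexity bound. The paper makes the step you flag as the ``main obstacle'' concrete by citing Nesterov's explicit barriers $\Phi(x,z)=-\ln(\ln x-z)-\ln x$ and $\Gamma(x,y)=-\ln(y-x\ln x)-\ln x$, each with parameter $2$ (hence $2N_s+2N_s=4N_s$), and handles the simplex equality by relaxing it to two inequalities $1-\epsilon\le\sum_i x_i\le 1+\epsilon$, which supplies the trailing $+2$.
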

 \begin{proof}
 	By a change of variable, we write an equivalent problem
 	\begin{align}
 	\underset{\bx}{\text{minimize}}\qquad &   \bc^\T \bx &\label{prob:proof-general-uncertainty-model} \\
 	\text{subject to} \qquad &  \sum_i \widehat{X}_{ki}z_i \geq \alpha_k & \forall k\nonumber\\
 	& \sum_i y_i - \widehat{Y}_{hi} \leq \beta_h  & \forall h \nonumber\\
 	& z_i \leq \ln(x_i)  &  \forall i \nonumber\\
 	& y_i \geq x_i\ln x_i  &  \forall i \nonumber\\
 	& \sum_{i}x_i \geq 1-\epsilon &\nonumber \\
 	& -\sum_{i}x_i \geq -(1+\epsilon). &\nonumber 
 	\end{align}
 	With the following notes \cite{Nesterov1994interior}
 	\begin{itemize}
 		\item  $\Phi(x,z) = -\ln(\ln x-z) - \ln x$ is a 2-self-concordant barrier for the epigraph of $\{(x,z)|\ \ln(x)\geq z,x\geq 0\}$
 		\item  $\Gamma(x,y) = -\ln(y - x\ln x) - \ln x$ is a 2-self-concordant barrier for the epigraph of $\{(x,z)|\ x\ln(x)\leq y,x\geq 0\}$
 	\end{itemize}	
 	This allows us to construct a barrier function of the feasible set of Problem \ref{prob:proof-general-uncertainty-model}.
 	\begin{align}
 	\cF(\bx,\by,\bz) &= -\sum_k \ln\left(\sum_i\widehat{X}_{ki} - \alpha_k\right) - \sum_{h} \ln \left(-\sum_i (y_i + \widehat{Y}_{hi})+ \beta_h \right) + \sum_i \Phi(x_i,z_i)\nonumber \\
 	 &+ \sum_i \Gamma (x_i,y_i)  - \ln\left(\sum_i x_i +\epsilon -1 \right) - \ln\left(-\sum_i x_i -\epsilon  + 1 \right)\nonumber. 
 	\end{align}
 	We see that $\cF(\bx,\by,\bz)$ is a self-concordant \cite{Nesterov1994interior} with variable $4N_s+K+H +2$. The complexity of the path-following method associated with the aforementioned barrier is 
 	\[
 	\cO\left((4N_s+H+K+2)^{1/2}(4N_s+H+K)N_s^2 \ln \epsilon^{-1}\right).
 	\] 
 \end{proof}

Typically, $H,K \ll N_s$. As a result, the complexity can be  bounded by $\cO(N_s^{7/2}\ln\xi^{-1})$. 
	
\subsubsection{{$(s)$-rectangularity}}
In  this case, we need to  solve the inner problem 
$
\min_{\bq_s\in\cQ_s} \Bigg\{\sum_{a\in \cA}\exp\Big(r(a|s) +  \gamma \bbE_{s'}[V^*(s')] \Big)\Bigg\},
$ for any $s\in\cS$, to perform contraction iterations and compute optimal policies. 
The inner optimization is of the form
\begin{align}
\underset{\bx}{\text{minimize}}\qquad &   \sum_{a\in\cA} d_a\exp\left(\sum_{i=1}^{N_s} c_{ai}x_{ai}\right) &\label{prob:general-uncertainty-model-s-rect} \\
\text{subject to} \qquad &  \widehat{\bX}\ln\bx \geq \pmb{\alpha} &\nonumber\\
& (\bx^\T \ln\bx)\be - \widehat{\bY}\bx \leq \pmb{\beta}  & \nonumber\\
& \bx \in \Delta(N_s\times |\cA|),\nonumber
\end{align}
where $\widehat{\bX} \in \bbR^{K\times (|\cA|N_s)}_{+}$  and $\widehat{\bY}\in \bbR^{H\times (|\cA|N_s)}_{+}$ are parameter matrices of the likelihood models and entropy models, respectively. The proposition below shows that one can solve Problem \ref{prob:general-uncertainty-model-s-rect} in polynomial time.
\begin{proposition}
	\label{prop:s-rect-complexity}
	Assume that \eqref{prob:general-uncertainty-model-s-rect} satisfies the Slater condition, then a $\xi$-approximation of Problem \ref{prob:general-uncertainty-model-s-rect} can be achieved with complexity $$
	\begin{aligned}
		C(\xi) =& \cO\Big((4|\cA|N_s+4|\cA|+ H+K)^{1/2}(4|\cA|N_s+H+K)(|\cA|N_s)^2 \ln \xi^{-1}\Big).
	\end{aligned}$$	
\end{proposition}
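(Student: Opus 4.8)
The plan is to mirror the proof of Proposition~\ref{prop:sa-rect-complexity}: reduce \eqref{prob:general-uncertainty-model-s-rect} to a convex program with a \emph{linear} objective whose feasible set carries an explicit self-concordant barrier, and then invoke the standard path-following complexity. The only genuinely new feature relative to the $(s,a)$-rectangular case is that the objective $\sum_{a} d_a \exp(\sum_i c_{ai}x_{ai})$ is nonlinear. So the first step is to lift it into the constraints: introduce one epigraph variable $t_a$ per action together with $t_a \geq \exp(\sum_i c_{ai}x_{ai})$, equivalently $\ln t_a \geq \sum_i c_{ai}x_{ai}$, and minimize the \emph{linear} objective $\sum_a d_a t_a$. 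Since each map $\bx \mapsto \exp(\sum_i c_{ai}x_{ai})$ is convex and $\cQ_s$ is convex, the lifted problem is an equivalent convex program, and an optimal solution of one recovers an optimal solution of the other.

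Next, exactly as in Proposition~\ref{prop:sa-rect-complexity}, I would introduce variables $z_{ai}\leq \ln x_{ai}$ and $y_{ai}\geq x_{ai}\ln x_{ai}$ to turn the likelihood constraints $\widehat{\bX}\ln\bx\geq\pmb{\alpha}$ and the relative-entropy constraints $(\bx^\T\ln\bx)\be - \widehat{\bY}\bx \leq \pmb{\beta}$ into linear inequalities in $(\bx,\by,\bz)$, and replace each per-action normalization $\sum_{s'}q(s'|a,s)=1$ by the two relaxed inequalities $\sum_{s'}q(s'|a,s)\geq 1-\epsilon$ and $-\sum_{s'}q(s'|a,s)\geq -(1+\epsilon)$ used before. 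The feasible set then becomes an intersection of finitely many linear inequalities with the epigraphs $\{\ln x_{ai}\geq z_{ai}\}$, $\{x_{ai}\ln x_{ai}\leq y_{ai}\}$, and $\{\ln t_a \geq \sum_i c_{ai}x_{ai}\}$. Using the $2$-self-concordant barriers $\Phi(x,z)=-\ln(\ln x-z)-\ln x$ and $\Gamma(x,y)=-\ln(y-x\ln x)-\ln x$ of \cite{Nesterov1994interior}, and noting that $\ln t_a \geq \sum_i c_{ai}x_{ai}$ is an affine pre-composition of the same log-epigraph (hence also admits a $2$-self-concordant barrier of the form $-\ln(\ln t_a - \sum_i c_{ai}x_{ai}) - \ln t_a$), I would assemble a single self-concordant barrier $\cF$ for the whole feasible set by summing these terms with the standard logarithmic barriers for the remaining linear inequalities.

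Finally I would tally the self-concordance parameter and the per-iteration cost. There are $|\cA|N_s$ copies each of $\Phi$ and $\Gamma$ (parameter $2$ each), $|\cA|$ exponential-epigraph barriers (parameter $2$ each), $2|\cA|$ relaxed per-action normalization barriers, and $K+H$ barriers for the KL-type inequalities, giving $\nu = 4|\cA|N_s + 4|\cA| + H + K$. The path-following method then needs $\cO(\sqrt{\nu}\,\ln\xi^{-1})$ Newton steps, and each step forms and factors a Newton system in the $\cO(|\cA|N_s)$ lifted variables against $\cO(|\cA|N_s + H + K)$ barrier terms, at cost $\cO((4|\cA|N_s+H+K)(|\cA|N_s)^2)$; multiplying yields the stated bound. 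The main obstacle is the lifting step: confirming that the exponential objective can be pushed into the constraints without destroying the barrier structure, i.e.\ that $\{(\bx,t):\ln t \geq \sum_i c_{ai}x_{ai}\}$ carries an efficiently computable $2$-self-concordant barrier, after which the analysis is a bookkeeping variant of Proposition~\ref{prop:sa-rect-complexity}. As usual, the Slater condition is invoked to guarantee a strictly feasible point and the existence of the central path.
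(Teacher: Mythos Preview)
Your proposal is correct and follows essentially the same approach as the paper: lift the exponential objective via epigraph variables $t_a$ with $\ln t_a \geq \sum_i c_{ai}x_{ai}$, introduce the auxiliary variables $z_{ai},y_{ai}$ to linearize the KL-type constraints, relax the per-action normalizations, and sum the $2$-self-concordant barriers $\Phi,\Gamma$ together with logarithmic barriers for the linear inequalities to obtain a barrier of parameter $4|\cA|N_s + 4|\cA| + H + K$, then invoke the standard path-following bound. Your barrier-parameter tally and per-step cost exactly match the paper's, so nothing further is needed.
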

\begin{proof}
	By a change of variable, we write an equivalent problem
\begin{align}
\underset{\bx,\by,\bz,\bt}{\text{minimize}}\qquad &   \sum_{a\in\cA} t_a   &\label{prob:proof-general-uncertainty-model-s-rect} \\
\text{subject to} \qquad &  \sum_{i=1}^{N_s}c_{ai}x_{ai} \leq \ln  t_a  & \forall a \nonumber\\
 &  \sum_{a,i} \widehat{X}_{kai}z_{ai} \geq \alpha_k & \forall k\nonumber\\
& \sum_{a,i} y_{ai} - \widehat{Y}_{hai} \leq \beta_h  & \forall h \nonumber\\
& z_{ai} \leq \ln(x_{ai})  &  \forall i \nonumber\\
& y_{ai} \geq x_{ai}\ln x_{ai}  &  \forall i \nonumber\\
& \sum_{i}x_{ai} \geq 1-\epsilon & \forall a\nonumber \\
& -\sum_{i}x_{ai} \geq -(1+\epsilon) &\forall a\nonumber 
\end{align}
A self-concordant barrier for the feasible set of Problem \ref{prob:proof-general-uncertainty-model} can be constructed as 
\begin{align}
\cF(\bx,\by,\bz,\bt) &= -\sum_k \ln\left(\sum_{a,i}\widehat{X}_{kai}z_{ai} - \alpha_k\right) - \sum_{h} \ln \left(-\sum_{a,i} (y_{ai} + \widehat{Y}_{hai})+ \beta_h \right) \nonumber \\
&+ \sum_{a,i} \Phi(x_{ai},z_{ai})+ \sum_{a,i} \Gamma (x_{ai},y_{ai})  + \sum_{a}\Phi\left(t_s, \sum_{i}c_{ai}x_{ai} \right) \nonumber \\
&- \sum_a \ln\left(\sum_i x_i +\epsilon -1 \right) - \sum_a\ln\left(-\sum_i x_i -\epsilon  + 1 \right)\nonumber. 
\end{align}
 $\cF(\bx,\by,\bz,\bt)$ is a self-concordant \cite{Nesterov1994interior} with variable $4|\cA|N_s + 4|\cA| +K+H$. The complexity of the path-following method associated with the aforementioned barrier is 
\[
\cO\left((4|\cA|N_s+4|\cA|+ H+K)^{1/2}(4|\cA|N_s+H+K)(|\cA|N_s)^2 \ln \epsilon^{-1}\right).
\] 
\end{proof} 

In cases $H,K\ll |\cA|N_s$ the  complexity is about $\cO((|\cA|N_s)^{7/2} \ln \xi^{-1} )$.

\end{document}